\newcommand{\rebut}[1]{{\color{black} #1}}
\definecolor{blueish}{HTML}{007F99}
\definecolor{purpleish}{HTML}{72177A}
\definecolor{grape}{HTML}{6f2da8}
\definecolor{magenta}{HTML}{c229a9}
\theoremstyle{plain}
\newtheorem{theorem}{Theorem}[section]
\newtheorem{lemma}[theorem]{Lemma}
\theoremstyle{definition}
\theoremstyle{remark}
\DeclareMathOperator*{\E}{\mathbb{E}}
\newcommand{\sgn}{\mathop{\mathrm{sgn}}}
\newcommand{\mypar}[1]{\noindent \textbf{#1}}
\newcommand{\DSM}{\mathsf{DSM}}
\newcommand{\LSH}{\mathsf{LSH}}
\newcommand{\eps}{\epsilon}
\author{%
  Cenk Baykal \\
  Google Research \\
  \texttt{baykalc@google.com} \\
  \And
  Nishanth Dikkala \\
  Google Research \\
  \texttt{nishanthd@google.com} \\
  \AND
  Rina Panigrahy \\
  Google Research \\
  \texttt{rinap@google.com} \\
  \And
  Cyrus Rashtchian \\
  Google Research \\
  \texttt{cyroid@google.com} \\
  \And
  Xin Wang \\
  Google Research \\
  \texttt{wanxin@google.com} \\
}
\title{A Theoretical View on Sparsely Activated Networks}
\begin{document}

\maketitle

\begin{abstract}
    Deep and wide neural networks successfully fit very complex functions today, but dense models are starting to be prohibitively expensive for inference. To mitigate this, one promising direction is networks that activate a sparse subgraph of the network. The subgraph is chosen by a data-dependent routing function, enforcing a fixed mapping of inputs to subnetworks (e.g., the Mixture of Experts (MoE) paradigm in Switch Transformers). 
    \rebut{However, prior work is largely empirical, and while existing routing functions work well in practice, they do not lead to theoretical guarantees on approximation ability.}
    \rebut{We aim to provide a theoretical explanation for the power of sparse networks.}
    As our first contribution, we present a formal model of data-dependent sparse networks that captures salient aspects of popular architectures.
    \rebut{We then introduce a routing function based on locality sensitive hashing (LSH) that enables us to reason about how well sparse networks approximate target functions.}
    \rebut{After representing LSH-based sparse networks with our model, we prove that sparse networks can match the approximation power of dense networks on Lipschitz functions.} 
    \rebut{Applying LSH on the input vectors means that the experts interpolate the target function in different subregions of the input space.}
    \rebut{To support our theory, we define various datasets based on Lipschitz target functions, and we show that sparse networks give a favorable trade-off between number of active units and approximation quality.}
\end{abstract}
\section{Introduction}
\label{sec:intro}

Overparameterized neural networks continue to yield performance gains as their sizes increase. This trend has been most prominent with large Transformer based language models \cite{brown2020language, devlin2018bert,raffel2019exploring}. However, using large, dense networks makes training and inference very expensive, and computing a forward pass may require trillions of floating point operations (FLOPs). It is an active area of research to improve the scalability and efficiency without decreasing the expressiveness or quality of the models. 

One way to achieve this goal is to only activate part of the network at a time. For example, the Mixture of Experts (MoE) paradigm~\cite{jordan1994hierarchical,shazeer2017outrageously} uses a two-step approach. First, each input is mapped to a certain subnetwork, known as an expert. Then, upon receiving this input, only this particular subnetwork performs inference, leading to a smaller number of operations compared to the total number of parameters across all experts. Some variations of this idea map parts of an input (called tokens) individually to different experts. Another well-studied generalization is to map an input to a group of subnetworks and aggregate their output in a weighted manner. Switch Transformers~\cite{fedus2021switch} successfully use a refined version of the MoE idea, where the input may be the embedding of a token or part of a hidden layer's output. Researchers have investigated many ways to perform the mapping, such as Scaling Transformers~\cite{jaszczur2021sparse} or using pseudo-random hash functions~\cite{roller2021hash}. In all cases, sparsity occurs in the network because after the input is mapped to the subnetwork, only this subset of neurons needs to be activated. Moreover, the computation of the mapping function, sometimes referred to as a `routing' function, takes significantly less time than the computation across all experts.

The success of these approaches is surprising. Restricting to a subnetwork could instead {\em reduce} the expressive power and the quality of the model. However, the guiding wisdom seems to be that it suffices to maximize the number of parameters while ensuring computational efficiency. Not all parameters of a network are required for the model to make its prediction for any given example. 
\rebut{Our goal is to provide a theoretical explanation for the power of sparse models on concrete examples.}



\paragraph{Our Results.}
Our first contribution is a formal model of networks that have one or more sparsely activated layers. These sparse layers are data dependent: the subset of network activations and their weight coefficients depend on the input. We formally show that our model captures popular architectures (e.g., Switch and Scaling Transformers) by simulating the sparse layers in these models.

We next prove that \rebut{the above class of sparsely activated models} can learn a large class of functions \rebut{more efficiently than dense models}.
\rebut{As our main proof technique, we introduce a new routing function, based on locality sensitive hashing (LSH), that allows us to theoretically analyze sparsely activated layers in a network.}
LSH maps points in a metric space (e.g., $\mathbb{R}^d$) to `buckets' such that nearby points map to same bucket. The total number of buckets used by a LSH function is referred to as the size of the hash table.
\rebut{Prior work has already identified many routing functions that work well in practice. We do not aim to compete with these highly-optimized methods, but rather we aim to use LSH-based routing as a way to reason about the approximation ability of sparse networks.}

In Theorem~\ref{thm:main-lipschitz-lsh-ub}, we show that LSH-based sparse models can approximate real-valued Lipschitz functions in $\mathbb{R}^d$. Although real data often lives in high dimensions, the underlying manifold where the inputs are supported is often assumed to be low-dimensional (e.g. the manifold of natural images vs $\mathbb{R}^{224 \times 224}$ for a $224\times 224$ image). We model this by assuming that our inputs lie in a $k$-dimensional manifold within $\mathbb{R}^d$. To get $\epsilon$ approximation error, we need an LSH table of size approximately $O((\sqrt{dk}/\epsilon)^k)$ but a forward pass only requires time $O(dk\log(1/\epsilon))$ as only one of the $O((\sqrt{dk}/\epsilon)^k)$ non-empty buckets are accessed for any given example.

In Theorem~\ref{thm:dense-lb}, we complement our upper bounds by proving a lower bound of $\Omega((2\sqrt{d}/\epsilon)^k)$ on the size needed for both dense and sparse models (when points live in $k$ dimensions). This also transforms into a lower bound on the number of samples required to learn these functions using a dense model. This lower bound implies that a forward pass on a dense model takes time $\Omega(d(2/\sqrt{k}\epsilon)^k)$, which is exponentially worse than the time taken by the sparse model.
Altogether, we show that for the general class of Lipschitz functions, sparsely activated layers are as expressive as dense layers, while performing significantly fewer floating point operations (FLOPs) per example.


\rebut{To support our theory, we perform experiments in Section~\ref{sec:experiments} on approximating Lipschitz functions with sparse networks.}
\rebut{We identify several synthetic datasets where models with data-dependent sparse layers outperform dense models of the same size. Moreover, we achieve these results with relatively small networks. This provides a few minimum working examples of when sparsity is beneficial. 
Our results complement the existing work that has already shown the success of very large MoE models on real-world NLP and vision tasks~\cite{fedus2021switch, jaszczur2021sparse, riquelme2021scaling}.}
Our experiments also generalize the intuition from Figure \ref{fig:1d_lsh}, where the model outputs a different constant value for each bucket.
A puzzling empirical result was observed in \cite{roller2021hash} where a pseudo-random uniform hash function is used to route input tokens to experts. Intuitively, this might map very nearby or similar points to distinct experts, leading to a potentially non-smooth behavior. 
\rebut{We show that on our synthetic, Lipschitz datasets, the $\LSH$ model outperforms uniform random hashing, which further corroborates our theory.}


\begin{figure}[h!]
  \centering
  \begin{subfigure}[b]{0.5\linewidth}
    \centering
    \includegraphics[width=\textwidth]{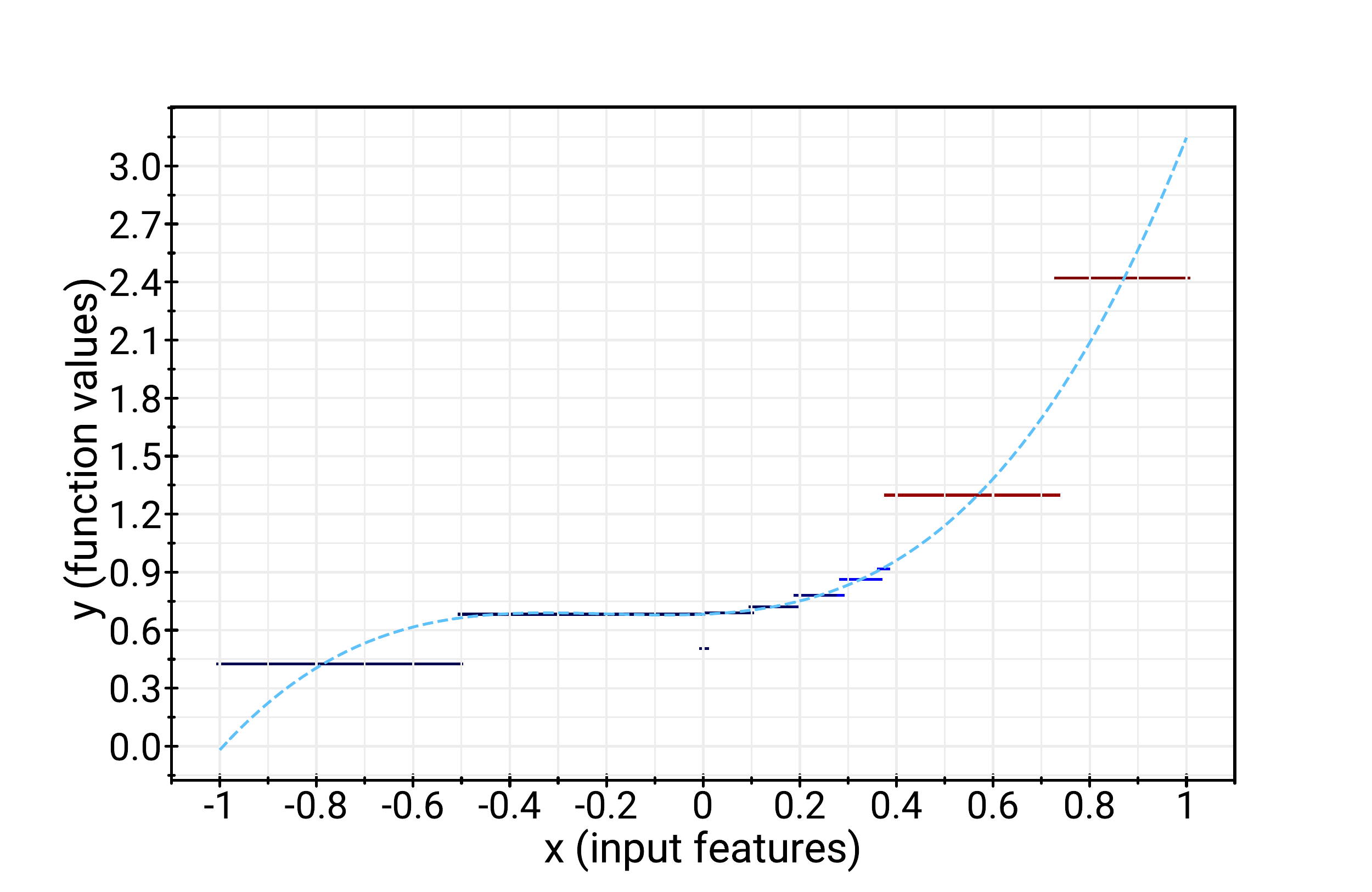}
    \caption{1D example}
    \label{fig:1d_lsh}
  \end{subfigure}%
  \begin{subfigure}[b]{0.5\linewidth}
    \centering
    \includegraphics[width=\textwidth]{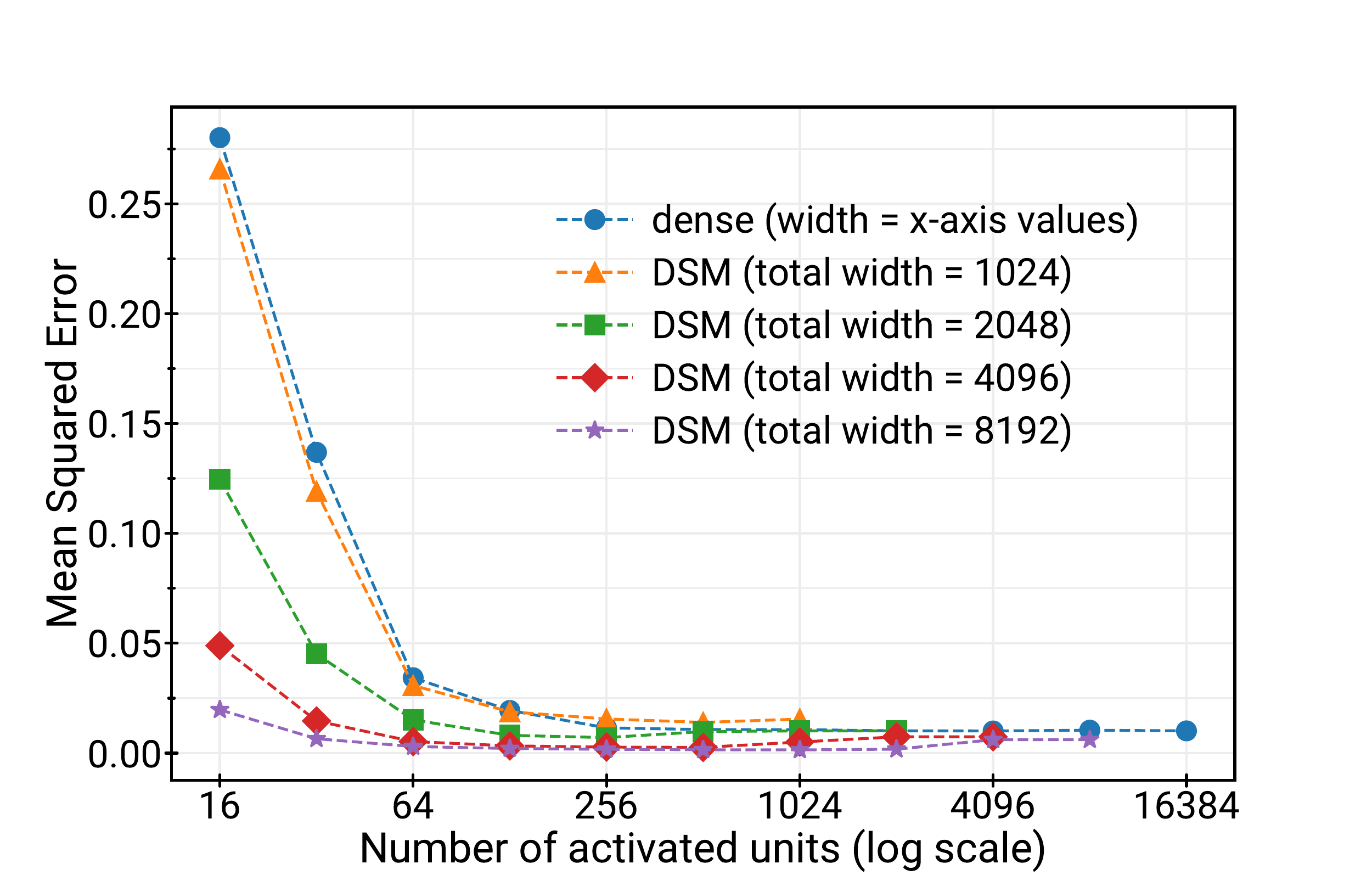}
    \caption{sparse/dense scaling behavior}
    \label{fig:sparse_vs_dense}
  \end{subfigure}
  \vspace{-0.2in}
  \caption{Learning polynomial functions with data-dependent sparse models: (a) Dashed curve is the target function graph, piecewise constant curve is the learned LSH model output, and different LSH buckets are indicated by different colors. Note neighboring points are hashed to the same bucket, where a simple constant can be learned to approximate target function values; (b) Scaling behavior of data-dependent sparse models (DSM) versus dense models, in which sparse models outperform dense models of the same size, see Section \ref{sec:prelims} and Section \ref{sec:experiments} for details.}
  \label{fig:intro_expt}
\end{figure}

\paragraph{Related Work.}
\label{sec:relatedwork}
Sparsely activated networks have had enormous empirical success~\cite{allingham2021sparse, artetxe2021efficient,du2021glam, kim2020fastformers, nie2021dense, riquelme2021scaling, shazeer2017outrageously, wang2021ernie}. 
The Switch Transformer~\cite{fedus2021switch} is one of the first major, contemporary applications of the sparsely activated layers. Follow-up works such as Scaling Transformers~\cite{jaszczur2021sparse} and other hash functions~\cite{roller2021hash} aim to improve the sparse layers. These papers build upon seminal MoE works~\cite{jacobs1995methods, jacobs1991adaptive, jordan1994hierarchical}, and other uses of the MoE paradigm~\cite{lepikhin2021gshard,shazeer2018mesh-tensorflow}.  
There has been work on using statistical mechanics to analyze the generalization of MoE models~\cite{kang1996statistical}. However, to the best of our knowledge, there is no systematic theoretical study of modern sparsely activated networks. 

The above work on dynamic sparsity builds on previous \emph{static} sparsity efforts, e.g., weight quantization~\cite{li2020train}, dropout~\cite{srivastava2014dropout}, and pruning (see the survey~\cite{hoefler2021sparsity} and references). Static sparsity means that the subnetwork activation does not change in a data-dependent way. The focus is on generalization and compression, instead of achieving fast inference time with a huge number of parameters. 

%

Our work builds on locality sensitive hashing (LSH), a well-studied technique for approximate nearest neighbor search (see the survey~\cite{andoni2018approximate} or the book~\cite{har2011geometric} and references therein for LSH background). A seminal and popular LSH family uses random hyperplanes to partition the space~\cite{andoni2015practical, charikar2002similarity}. Another option is to use balls centered at lattice points, which works well for $\ell_2$ distance~\cite{andoni2006near}. We use these ideas to design sparse networks. For uses of LSH in deep learning, sketch-based memory improves network capacity~\cite{ghazi2019recursive, panigrahy2021sketch}. Other empirical work uses LSH to improve dense network training time or memory~\cite{chen2020mongoose, chen2015compressing, rae2016scaling}. Our work differs from the prior studies because we implement the LSH-based approach with sparsely activated networks, with the goal of reducing inference time and achieving low regression error. LSH-based sparsity may be most relevant for vision transformers~\cite{khan2021transformers, riquelme2021scaling, wu2020visual}, using distance metrics and routing functions tailored to images, or for time series analysis~\cite{wen2022transformers}.

\section{Preliminaries}
\label{sec:prelims}
Let $f:\mathbb{R}^d \to \mathbb{R}$ be a multivariate real-valued function that we want to learn with a neural network. We consider regression, and we aim to minimize the mean-squared error or the $\ell_{\infty}$ error.
For a function $f$ defined on a set $\Gamma$ and an estimator $\hat{f}_n$, we define
$\|f - \hat{f}_n\|_{\infty} = \sup_{x \in \Gamma} |f(x) - \hat{f}_n(x)|.$
For some results, we approximate $f$ on a subset $\mathcal{V} \subseteq \mathbb{R}^d$. For example, $\mathcal{V}$ may be the intersection of $[-1,1]^d$ and a $k$-dimensional subspace. We also consider the case when $f$ is $L$-Lipschitz, meaning that $|f(x) - f(x')| \leq L\cdot \|x-x'\|_2$ for all $x, x' \in \mathbb{R}^d$. We define $[n] = \{1,2,\ldots, n\}$.

\subsection{Data-Dependent Sparse Model}

A dense neural network $g$ with a fully connected final layer can be expressed as $g(x) = A \cdot \phi(x)$ where $A \in \mathbb{R}^{1 \times t}$ is a matrix and $\phi: \mathbb{R}^{d} \to \mathbb{R}^t$ is a function (e.g., $\phi$ captures the representation learned up until the final layer). Here, $t$ is the width of the final layer and $d$ is the input dimensionality.

Our focus is on the types of networks captured by what we call the \emph{Data-Dependent Sparse Model ($\DSM$)}. 
This is a network with a sparsely activated final layer. Formally, let $t$ be the width, and let $s \leq t$ be a sparsity parameter. Then, we consider functions $g$ of the form
$g(x) = A^x \cdot \phi(x)$ where 
$A^x \in \mathbb{R}^{1 \times t}$ and $\phi: \mathbb{R}^{d} \to \mathbb{R}^t$. The crux of the model is the final layer. The sparsity comes from letting $A^x = A \circ \mathsf{mask}(x)$, where $\mathsf{mask}(x) \in \{0,1\}^{1 \times t}$ is an $s$-sparse indicator vector, and  ``$\circ$'' is the entry-wise product. The mask zeroes out certain positions, and $A$ contains the learned weights but no longer depends on~$x$. 
Intuitively, the $\mathsf{mask}$ is the ``routing function'' for the sparse activations. To do so, we define  $\mathsf{mask}(x) \circ \phi(x)$ to mean that we zero out element of the vector $\phi(x)$. Then, we have 
$$g(x) = (A \circ \mathsf{mask}(x)) \phi(x)  = A (\mathsf{mask}(x) \circ \phi(x)).$$ 
Under the above definitions, let $\DSM(d,s,t)$ be the set of functions $g = (A \circ \mathsf{mask}(x)) \cdot \phi(x)$. In what follows, we use $A^x$ as shorthand for $A \circ \mathsf{mask}(x)$.


In the $\DSM$ model, we wish to understand the effect of sparsity on how well the network can approximate certain functions. It is important to allow the final layer $A^x \in \mathbb{R}^{1 \times t}$ to be data-dependent. The values in $A^x$ depend on the target function $f$, and we learn this from training data. 

Real-world transformers may have multiple sparse layers. To capture this, we can compose $\DSM$ functions. For example, two sparse layers comes from $g(x) = A \circ \mathsf{mask}_2(x) \circ \phi_2(\mathsf{mask}_1(x) \circ \phi_1(x))$.  For simplicity, we focus on a single sparse layer in what follows, which suffices for our main results.

\subsection{Hash-based Routing}

Prior sparse models compute hash functions of the input vector $x$ to determine $\mathsf{mask}$ and the network activations~\cite{roller2021hash}. 
Our main theorem uses this strategy, considering LSH families. LSH has the property that nearby points are more likely to end up in the same hash bucket than far away points. We can use LSH to define a general class of efficient regression functions. In Section~\ref{sec:model-results}, we prove that the $\DSM$ model captures popular sparse architectures and the following $\LSH$ model.

\mypar{$\LSH$ Model.} We review an $\LSH$ framework based on sign patterns. Let $h_1,\ldots, h_m:\mathbb{R}^d \to \{-1,1\}$ be $m$ distinct hash functions. Partition the space into $2^m$ buckets based on the $m$ sign patterns $z_x = (h_1(x),\ldots,h_m(x))$ over all $x \in \mathbb{R}^d$. 
Then, for each $z \in \{-1,1\}^m$, we can specify a function $\hat g_{z}(x)$, where the goal is for $g_z$ to approximate the target function $f$ in the part of space associated with $z$ (i.e., points $x$ that have pattern $z$ under $h_1,\ldots, h_m$). More generally, we can allow $s$ sets of such hash functions $(h^i_1,\ldots, h^i_m)$, and $s$ sets of these approximation functions $(\hat g^1_{z^1},\ldots, \hat g^s_{z^s})$ for $i = 1,\ldots, s$. On input $x$, we compute the sign patterns $z^1, \ldots, z^s$ and output $g(x) = \sum_{i=1}^s \alpha_i \hat g^i_{z^i}(x)$. Further, we can restrict each $\hat g^i_z$ to be a degree $\Delta \geq 0$ polynomial. For a fixed LSH family of possible hash functions, we let $\LSH(d, s, m, \Delta)$ denote this class of functions. In many cases, we only need $\hat g^i_z$ to be a constant function, i.e., $\Delta=0$, and we shorten this as $\LSH(d, s, m, 0) := \LSH(d, s, m)$. Our goal is to understand the trade-offs of the ``memory'' parameter $m$ and the ``sparsity'' amount $s$ that we need to approximate large classes of $d$-dimensional functions (i.e., $L$-Lipschitz functions supported on a $k$-dimensional manifold).

\mypar{Euclidean $\LSH$ Model \cite{datar2004locality}.} This is a popular LSH family for points in $\mathbb{R}^d$. In this case, each hash function outputs an integer (instead of $\pm 1$ above). Each bucket in this model is defined by a set of hyperplane inequalities. There are two parameters $(D,\epsilon)$. We sample $D$ random directions $a_1, \ldots, a_D$ where each coordinate of each $a_i$ is an independent normal variable. In addition we sample $b_i \sim \mathrm{Unif}[0,\epsilon]$ independently for $i \in [D]$.  For a point $x \in \mathbb{R}^d$, we compute an index into a bucket via a function $h_i : \mathbb{R}^d \to \mathbb{Z}$ defined as
$$
    h_i(x) = \left\lfloor \frac{a_i^\top x + b_i}{\epsilon} \right\rfloor. 
$$ Here, the index $i$ ranges over $i \in [D]$, leading to a vector of $D$ integers. 




\section{Simulating Models with DSM}
\label{sec:model-results}
We formally justify the $\DSM(d,s,t)$ model by simulating other models using it.
We start with simple examples (interpolation, $k$ nearest neighbor ($k$-NN) regression, and Top-$K$), then move on to transformers and the LSH model. For the simple examples, we only need one hidden layer, where $\phi(x) = \sigma(Bx)$ for a matrix $B \in \mathbb{R}^{t\times d}$ and non-linearity $\sigma$.

\mypar{Interpolation.} We show how to compute $f$ at $t$ points $x_1,\ldots,x_t$. We only need sparsity $s = 1$ and \rebut{$\sigma$ is simply the identity function.}
We set $A_i = f(x_i) / \langle b_i, x_i \rangle$, where $b_i$ is the $i$th row of $B$.
Further, we let $\mathsf{mask}(x_i)$ have a one in the $i$th position and zeroes elsewhere. Then, we define 
$g(x_i) = (A \circ \mathsf{mask}(x_i))Bx_i = f(x_i)$, which computes $f$ at the $t$ input points. 

\mypar{Sparse networks perform $k$-NN regression}. 
We sketch how the $\DSM(d,k,n)$ model can simulate $k$-NN with $g(x) = (A\circ \mathsf{mask}(x)) \sigma(B x)$, when the dataset contains vectors with unit $\ell_2$ norm.Let the rows of $B$ be a set of $n$ unit vectors $b_1,\ldots,b_n \in \mathbb{R}^d$. For the target function~$f$, let $A = \frac{1}{k}(f(b_1),\ldots,f(b_n))$.
Define $\sigma(B x)$ to have ones in the top $k$ largest values in $B x$ and zeroes elsewhere. For a unit vector $x$, these $k$ positions correspond to the $k$ largest inner products $\langle x, b_i \rangle$. Since 
$\|x - b_i\|_2 = 2 - 2\langle x, b_i \rangle$,  the non-zero positions in $\sigma(B x)$ encode the $k$ nearest neighbors of $x$ in $\{b_1,\ldots,b_n\}$.
Thus, $g(x)$ computes the average of $f$ at these $k$ points, which is exactly $k$-NN regression. Moreover, only $k$ entries of $A$ are used for any input, since $\sigma(Bx)$ is $k$-sparse; however, computing $\sigma(Bx)$ takes $O(nd)$ time.
While there is no computational advantage from the sparsity in this case, the fact that $\DSM$ can simulate $k$-NN indicates the power of the model.

\mypar{Simulating Top-$K$ routing.} For an integer $K \in \mathbb{N}^+$, certain MoE models choose the $K$ largest activations to enforce sparsity; in practice, $K \in \{1,2\}$ is common~\cite{riquelme2021scaling}. We use $\mathsf{mask}(x)$ with $\|\mathsf{mask}(x)\|_0 = K$ as the indicator for the $K$ largest values of $Bx$. Then, $\sigma$ is the identity, and $g(x) = (A\circ \mathsf{mask}(x)) B x$ applies $A$ to the top $K$ entries of $Bx$.

\subsection{Simulating transformer models with $\DSM$}
Real-world networks have many hidden layers and multiple sparse layers. For concreteness, we describe how to simulate a sparsely activated final layer. As mentioned above, we can compose functions in the $\DSM$ model to simulate multiple sparse layers.

\mypar{Switch Transformers~\cite{fedus2021switch}.} The sparse activations in transformers depend on
a \emph{routing function} $R: \mathbb{R}^d \to \{1,\ldots, L\}$, where $R(x)$ specifies the subnetwork that is activated (for work on the best choice of $R$, see e.g.,~\cite{roller2021hash}). To put this under the $\DSM(d,s,t)$ model, consider a set of trainable matrices $A_1,\ldots, A_L \in \mathbb{R}^{1 \times s}$, where the total width is $t = s \cdot L$. On input $x$, we think of $A^x$ as a $1 \times t$ matrix with $s$ non-zero entries equal to $A_{R(x)}$. In other words, $A$ is the concatenation of $A_1,\ldots, A_L$, and $\mathsf{mask}(x)$ is non-zero on the positions corresponding to $A_{R(x)}$.

\mypar{Scaling Transformers~\cite{jaszczur2021sparse}.} 
The key difference between Switch and Scaling transformers is that the latter imposes a block structure on the sparsity pattern. Let $t$ be the width, and let $s$ be the number of blocks (each of size $t' = t/s$). Scaling Transformers use only one activation in each of the $s$ blocks. In the $\DSM(d,s,t)$ model, we capture this with $A^x$ as follows.  Let $e_i \in \{0,1\}^{t'}$ denote the standard basis vector (i.e., one-hot encoding of $i \in [t'])$. The sparsity pattern is specified by indices $(i_1,\ldots, i_s)$. Then, $A^x = (\alpha_1 e_{i_1},\ldots, \alpha_b e_{i_s})$ for scalars $\alpha_1,\ldots, \alpha_s \in \mathbb{R}$.

\subsection{Simulating the $\LSH$ model using $\DSM$}

We explain how to represent the $\LSH$ model using the $\DSM$ model. The key aspect is that $A^x$ depends on the LSH buckets that contain $x$, where we have $s$ non-zero weights for the $s$ buckets that contain each input. This resembles the Scaling Transformer, where we use the LSH buckets to define routing function.
In the $\LSH(d, s, m, \Delta)$ model, there are $s$ sets of $m$ hash functions, leading to $s \cdot 2^m$ hash buckets. We use width $t = s \cdot 2^m$ for the $\DSM(d,s,t)$ network. The entries of $A^x$ are in one-to-one mapping with the buckets, where only $s$ entries will be non-zero depending on the $s$ buckets that $x$ hashes to, that is, the values $(h^i_1(x),\ldots, h^i_m(x)) \in \{-1,1\}^m$ for $i = 1,2,\ldots, s$.

We now determine the values of these $s$ non-zero entries. We store a degree $\Delta$ polynomial $\hat g(x) :\mathbb{R}^d \to \mathbb{R}$ associated with each bucket. For our upper bounds, we only need degree $\Delta=0$, but we mention the general case for completeness. If $\Delta=0$, then $\hat g$ is simply a constant $\alpha$ depending on the bucket. 
An input $x$ hashes to $s$ buckets, associated with $s$ scalars $(\alpha_1,\ldots, \alpha_s)$. To form $A^x$, set $s$ entries to the $\alpha_i$ values, with positions corresponding to the buckets. For degree $\Delta \geq 1$, we store coefficients of the polynomials $\hat g$, leading to more parameters. Section~\ref{sec:sparse-random-functions} describes how to use LSH-based sparse networks to approximate Lipschitz functions.

\mypar{Computing and storing the LSH buckets.} Determining the non-zero positions in $A^x$ only requires $O(sm)$ hash computations, each taking $O(d)$ time with standard LSH families (e.g., hyperplane LSH). 
We often take $m$ to be a large constant. Thus, the total number of operations to compute a forward pass in the network $O(smd) \approx O(sd)$.  The variable $m$ above determines the total number of distinct buckets we will have ($2^m$). For an $n$ point dataset, $m = O(\log n)$ is a realistic setting in theory. Therefore, $2^m = \mathrm{poly}(n)$ is often a reasonable size for the hash table and moreover the dense model requires width $O(2^m)$ as well. In practice, the number of experts is usually 32--128~\cite{fedus2021switch, riquelme2021scaling}.
The hash function typically adds very few parameters. For example, to hash $d$ dimensional data into $2^m$ buckets, LSH requires $O(md)$ bits and takes $O(md)$ time as well. In summary, the LSH computation does not asymptotically increase the number of FLOPs for a forward pass in the network. We also later bound the number of non-empty LSH buckets. The memory to store the hash table is comparable to the number of samples required to learn the target function.



\section{Data-Dependent Sparse Models are more Efficient than Dense Models}
\label{sec:sparse-random-functions}

For a general class of Lipschitz functions, LSH-based learners yield similar $\ell_{\infty}$ error as dense neural networks while making inference significantly more efficient.
It is a common belief in the machine learning community that although many of the datasets we encounter can appear to live in high-dimensional spaces, there is a low-dimensional manifold that contains the inputs. To model this, we assume in our theory that the inputs lie in a $k$-dimensional subspace (a linear manifold) of $\mathbb{R}^d$. Here $k \ll d$.
Theorem ~\ref{thm:main-lipschitz-lsh-ub} shows that the LSH model we propose can learn high-dimensional Lipschitz functions with a low $\ell_{\infty}$ error efficiently when the input comes from a uniform distribution on an unknown low-dimensional subspace. Theorem~\ref{thm:main-lipschitz-ub-manifolds} extends this result to when the input comes from an unknown manifold with a bounded curvature. We present Theorem~\ref{thm:main-lipschitz-lsh-ub} here and defer Theorem~\ref{thm:main-lipschitz-ub-manifolds} to Section~\ref{sec:manifolds-apdx}. All proofs are in the appendix.

\begin{theorem} \label{thm:main-lipschitz-lsh-ub}
For any $f: [-1, 1]^d \to \mathbb{R}$ that is $L$-Lipschitz, and for an input distribution $\mathcal{D}$ that is uniform on a $k$-dimensional subspace in $[-1,1]^d$, an LSH-based learner can learn $f$ to $\epsilon$-uniform error with $O(kL\sqrt{d}^k\log(L\sqrt{d}/\epsilon)/\epsilon^k)$ samples using a hash table of size $O(L\sqrt{d}^k/\epsilon^k)$ with probability $\ge 0.8$.
The total time required for a forward pass on a new test sample is $O(dk\log(L\sqrt{d}/\epsilon))$.
\end{theorem}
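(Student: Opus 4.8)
The plan is to instantiate the $\LSH$ model with $\Delta = 0$ (a single constant per bucket) and one set ($s = 1$) of Euclidean LSH functions, chosen so that every occupied bucket has small diameter when restricted to the hidden $k$-dimensional subspace. Since $f$ is $L$-Lipschitz, if every non-empty bucket has diameter at most $\epsilon/L$, then assigning to each bucket the value of $f$ at any sampled point inside it yields pointwise error at most $L\cdot(\epsilon/L)=\epsilon$ on the whole bucket, and hence $\epsilon$-uniform error on the support. Thus the theorem reduces to three quantitative claims: (i) a geometric covering bound on the number of non-empty buckets, which controls the hash-table size; (ii) a high-probability diameter bound for the random LSH cells, which fixes the number of hash functions $m$ and hence the forward-pass time; and (iii) a coupon-collector bound ensuring every occupied bucket is sampled, which controls the sample complexity.

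For step (i), I would set the Euclidean LSH width to $w \asymp \epsilon/L$. The support $\mathcal{V} = [-1,1]^d \cap (\text{subspace})$ is a convex $k$-dimensional body of diameter at most $2\sqrt{d}$, so a standard volumetric covering argument shows it meets at most $O((\sqrt{d}/w)^k) = O((L\sqrt{d}/\epsilon)^k)$ cells of diameter $O(w)$; since we only store occupied buckets, the hash table has size $O((L\sqrt{d}/\epsilon)^k)$, matching the claim.

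For step (ii), I would bound the diameter of each occupied cell. If $x,x'$ land in the same cell under all $m$ hash functions $h_i(x)=\lfloor (a_i^\top x + b_i)/w\rfloor$, then $|a_i^\top(x-x')| < w$ for every $i$. Writing $v = x-x'$, which lies in the $k$-dimensional subspace, the projection $a_i^\top v$ is distributed as $N(0,\|v\|^2)$, so for fixed $v$ with $\|v\| = \delta \gtrsim w$ the event $|a_i^\top v| < w$ has probability $O(w/\delta) < 1$ and these events are independent across $i$. A Johnson--Lindenstrauss / $\rho$-net argument over the unit sphere of the $k$-dimensional subspace (net size $e^{O(k)}$), combined with a union bound over the $O(N)$ relevant cells, then shows that $m = O(k + \log N) = O(k\log(L\sqrt{d}/\epsilon))$ directions make the probability that any cell has diameter exceeding $O(\epsilon/L)$ vanish. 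Each of the $m$ hashes costs $O(d)$ time, giving the forward-pass bound $O(md) = O(dk\log(L\sqrt{d}/\epsilon))$.

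For step (iii), under the uniform input distribution the occupied cells carry comparable mass $\Theta(1/N)$ with $N = O((L\sqrt{d}/\epsilon)^k)$, so a coupon-collector argument gives that $O(N\log N) = O\!\big(k(L\sqrt{d}/\epsilon)^k\log(L\sqrt{d}/\epsilon)\big)$ samples suffice to place at least one sample in every occupied bucket with the desired constant probability; combining the three failure events by a union bound keeps the total success probability above $0.8$. I expect the main obstacle to be step (ii): controlling the worst-case cell diameter over the \emph{unknown} $k$-dimensional subspace while using only $m = \tilde{O}(k)$ random directions rather than $\Omega(d)$, which is exactly what makes the forward pass cheap. Care is also needed in step (iii) to lower-bound the mass of boundary cells (those clipped by $\partial[-1,1]^d$), since an $\ell_\infty$ guarantee requires hitting every occupied bucket, including low-mass ones.
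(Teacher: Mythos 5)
Your overall plan---Euclidean LSH of width $\Theta(\epsilon/L)$, a single constant per bucket, and Lipschitzness to convert a cell-diameter bound into an $\epsilon$-uniform error bound---is the same as the paper's, but the two places where you diverge both hide real problems.

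First, step (iii) contains a genuine gap, which you flag but do not close. The premise that occupied cells carry comparable mass $\Theta(1/N)$ is false: cells clipped by the boundary of $[-1,1]^d$, and sliver-shaped cells formed near intersections of hyperplanes coming from different hash functions, can have arbitrarily small probability mass. Consequently no coupon-collector argument over \emph{buckets} can guarantee hitting every occupied bucket with a sample bound depending only on $N$, and an $\ell_\infty$ guarantee built on ``hit every bucket'' cannot be salvaged along these lines. The paper removes the requirement instead of trying to meet it: it draws enough samples to form an $\epsilon/(cL)$-cover of the support for a large constant $c$ (a coupon-collector argument over a \emph{fixed} cover by balls, each of which has non-negligible mass by convexity of the support, so low-mass cells never enter the argument), and at test time an input that lands in a bucket with no learned value is answered by an LSH-based approximate nearest-neighbor query among the training points, returning the bucket value of that neighbor. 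Lipschitzness bounds the extra error by $O(\epsilon)$. This fallback is the idea your proposal is missing.

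Second, your steps (i) and (ii) are inconsistent as stated. Your diameter bound requires $m = O(k\log(L\sqrt{d}/\epsilon))$ hash functions (per-direction anti-concentration, a net, and a union bound). But then the table-size bound of step (i) no longer follows: the number of nonempty cells that an arrangement of $m$ slab families induces on a $k$-dimensional body is \emph{not} bounded by the covering number---again because of slivers---and instead picks up an extra factor of order $m^k = (k\log(L\sqrt{d}/\epsilon))^k$ over your claimed $O((L\sqrt{d}/\epsilon)^k)$. The paper resolves both issues with one observation your proof lacks: by spherical symmetry of Gaussians, the $d$-dimensional Euclidean LSH restricted to the unknown $k$-dimensional subspace is itself a Euclidean LSH in $k$ dimensions, and its direction matrix $A \in \mathbb{R}^{Ck\times k}$ satisfies $\sigma_{\min}(A) \ge c\sqrt{k}$ with probability $\ge 9/10$. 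Then for any two points $x,x'$ of the subspace in a common bucket, $\|A(x-x')\|_\infty \le w$ forces $\|x-x'\|_2 = O(w)$; this is a diameter bound holding for \emph{all} cells simultaneously, with no union bound and only $m = O(k)$ hash functions. With $O(k)$ directions, the elementary product-over-directions count $(2\sqrt{d}/w)^{O(k)}$ then yields the stated table size. Your route can likely be repaired (the exponent stays $O(k)$, and your forward-pass time still matches), but it does not give $m = O(k)$, and the table-size and sample bounds as you state them do not follow from your argument.
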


The key idea behind this theorem is to use LSH to produce a good routing function. The locality of the points hashed to an LSH bucket gives us a way to control the approximation error. By using a large number of buckets, we can ensure their volume is small. Then, outputting a representative value suffices to locally approximate the target Lipschitz function (since its value changes slowly). The next lemma bounds the size of the sub-regions corresponding to each bucket, as well as bounding the number of non-empty buckets.




\begin{lemma}\label{lem:lsh-buckets-size}
Consider a Euclidean $\LSH$ model in $d$ dimensions with $Ck$ hyperplanes and width parameter $\epsilon$ where $C$ is a large enough constant. Consider a region $\Gamma$ defined by the intersection of a $k$-dimensional subspace with $[-1,1]^d$. We have that the $\LSH$ model defines a partitioning of $\Gamma$ into buckets. Let  $c$ be a constant. Then, with probability $\ge 0.9$, 
\begin{enumerate}
    \item Projecting any bucket of the LSH onto $\Gamma$ corresponds to a sub-region with diameter $\le \epsilon/c$.
    \item At most $\left(\frac{2\sqrt{d}}{\epsilon}\right)^{O(k)}$ buckets have a non-empty intersection with $\Gamma$.
\end{enumerate}
\end{lemma}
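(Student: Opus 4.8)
The plan is to treat the two claims separately, reducing each to a statement about the $Ck$ random hyperplane normals after projecting onto the $k$-dimensional subspace that contains $\Gamma$. Write $P$ for the orthogonal projection onto this subspace and set $g_i = Pa_i$, the projection of the $i$-th Gaussian direction. By rotational invariance of the standard Gaussian, each $g_i$ is, in subspace coordinates, a standard Gaussian vector in $\mathbb{R}^k$, and $a_i^\top v = g_i^\top v$ for every $v$ lying in the subspace. I would also record up front, via a $\chi^2_k$ tail bound and a union bound over the $Ck$ indices, the high-probability event $\mathcal{E}_1 = \{\max_i \|g_i\|_2 \le 2\sqrt{k}\}$, which both parts reuse.

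For Part 1, the key reduction is that it suffices to show $\min_{\|u\|_2 = 1} \max_i |g_i^\top u| \ge c$. Indeed, if $x,x'$ lie in a common bucket then $h_i(x)=h_i(x')$ places $(a_i^\top x+b_i)/\epsilon$ and $(a_i^\top x'+b_i)/\epsilon$ in the same unit interval, so $|a_i^\top(x-x')| = |g_i^\top(x-x')| < \epsilon$ for all $i$ (the offsets $b_i$ cancel and play no role). Hence any two points of the projected bucket differ by a vector $v$ in the subspace with $v \in S := \{v : |g_i^\top v| < \epsilon \text{ for all } i\}$, and the claimed quantitative bound forces $\|v\|_2 < \epsilon/c$, i.e.\ diameter at most $\epsilon/c$. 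To establish $\min_{\|u\|=1}\max_i|g_i^\top u| \ge c$, I would note that for a fixed unit $u$ the values $g_i^\top u$ are i.i.d.\ $N(0,1)$, so $\Pr[\max_i |g_i^\top u| < c] = \Pr[|N(0,1)| < c]^{Ck} = p^{Ck}$ with $p < 1$ a constant depending only on $c$. I would then union bound this over an $\eta$-net of the unit sphere in $\mathbb{R}^k$ (of cardinality $(3/\eta)^k$) and, on $\mathcal{E}_1$, transfer the estimate to every unit vector using that $u \mapsto \max_i|g_i^\top u|$ is $(\max_i\|g_i\|_2)$-Lipschitz. This balancing step is where I expect the main obstacle: the Lipschitz constant scales like $\sqrt{k}$, so the net granularity must be chosen against the net-point threshold, and driving $C$ down to an absolute constant (rather than one growing mildly with $k$) requires care; any such mild growth is harmless, since it is absorbed by the $\log(L\sqrt d/\epsilon)$ and $O(k)$ factors in Theorem~\ref{thm:main-lipschitz-lsh-ub}.

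For Part 2, I would count the cells of the hyperplane arrangement that the LSH induces inside the subspace. Intersecting each level family $\{x : a_i^\top x + b_i \in \epsilon\mathbb{Z}\}$ with the subspace yields parallel affine hyperplanes with normal $g_i$, and the number of them meeting $\Gamma$ is at most $\mathrm{range}_{x\in\Gamma}(a_i^\top x)/\epsilon + 1$. Since $\Gamma \subseteq [-1,1]^d$ has diameter at most $2\sqrt{d}$ and $a_i^\top(x-x') = g_i^\top(x-x')$, this range is at most $2\sqrt{d}\,\|g_i\|_2$, which on $\mathcal{E}_1$ is $O(\sqrt{dk})$; hence each family contributes $O(\sqrt{dk}/\epsilon)$ hyperplanes and the total is $H = O(Ck\sqrt{dk}/\epsilon)$. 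An arrangement of $H$ hyperplanes in $\mathbb{R}^k$ has at most $\sum_{j=0}^k \binom{H}{j} \le (eH/k)^k$ full-dimensional cells, and every non-empty bucket of $\Gamma$ sits in at least one such cell. Substituting $H$ gives $(O(C\sqrt{dk}/\epsilon))^k$, and I would finish by absorbing the $\mathrm{poly}(k)$ and constant factors into the exponent using $k \le d$ and $\epsilon = O(1)$, yielding the claimed $(2\sqrt{d}/\epsilon)^{O(k)}$.

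Finally, I would union bound the relevant failure probabilities — the diameter event of Part 1, the norm event $\mathcal{E}_1$, and the counting bound (which is deterministic given $\mathcal{E}_1$) — choosing the constants so that each bad event has probability well below $0.1$, to conclude that both statements hold simultaneously with probability at least $0.9$.
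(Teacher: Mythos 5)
Your proof is correct, and its skeleton matches the paper's: both exploit rotational invariance to replace the $d$-dimensional directions $a_i$ by standard Gaussians $g_i$ in the $k$-dimensional subspace, both derive the diameter bound from a uniform non-degeneracy statement about the matrix of projected directions, and both then count buckets. The key ingredients differ, though. For Part~1, the paper cites smallest-singular-value tail bounds for the $Ck \times k$ Gaussian matrix $A$ (so $\sigma_{\min}(A) \gtrsim \sqrt{k}$ with probability $\ge 9/10$) and converts $\|A(x_1-x_2)\|_\infty \le \epsilon$ into an $\ell_2$ bound; you instead prove the $\ell_\infty$ statement $\min_{\|u\|_2=1}\max_i |g_i^\top u| \ge c$ directly with an $\eta$-net. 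Your version is self-contained, but the obstacle you flag is real and is exactly what the $\ell_2$ formulation sidesteps: in $\ell_\infty$ the threshold is a constant while the Lipschitz constant of $u \mapsto \max_i|g_i^\top u|$ is $\Theta(\sqrt{k})$, forcing $\eta \approx 1/\sqrt{k}$, a net of size $k^{\Theta(k)}$, and hence $C = \Theta(\log k)$; in $\ell_2$ the threshold and the Lipschitz constant both scale like $\sqrt{Ck}$, so a constant-resolution net (equivalently, the random-matrix bound the paper invokes) yields an absolute constant $C$. So, strictly speaking, your argument proves the lemma with $O(k\log k)$ hyperplanes rather than $Ck$ for constant $C$; your observation that this slack is absorbed downstream in Theorem~\ref{thm:main-lipschitz-lsh-ub} is correct, since $\log k \le \log d$. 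For Part~2 your argument is genuinely different and in fact stronger than the paper's: the paper multiplies the per-direction grid counts, getting exponent $Ck$ (which is $O(k)$ only because $C$ is an absolute constant), whereas you count cells of the induced arrangement of $H = O(Ck\sqrt{dk}/\epsilon)$ hyperplanes inside $\mathbb{R}^k$, getting $(eH/k)^k$ --- exponent $k$ regardless of how many hyperplanes there are. This is precisely what makes your $C = \Theta(\log k)$ loss tolerable while still giving $(2\sqrt{d}/\epsilon)^{O(k)}$ buckets. Two small quantitative points to tighten: the chi-square union bound behind $\mathcal{E}_1$ needs a $+O\bigl(\sqrt{\log(Ck)}\bigr)$ slack in the norm bound (or a larger constant) to survive small $k$, and a bucket whose intersection with $\Gamma$ lies entirely inside a level hyperplane is not witnessed by a full-dimensional cell, so the count should be over faces of the arrangement rather than cells; both issues only change constants in the $O(k)$ exponent.
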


The above construction assumes knowledge of the dimensionality of the input subspace $k$. Fortunately, any upper bound on $k$ would also suffice. 
The table size of the LSH model scales exponentially in $k$ but not $d$. Thus, an LSH-based learner adapts to the dimensionality of the input subspace.

Theorem~\ref{thm:main-lipschitz-lsh-ub} shows that sparsely activated models (using LSH of a certain table size) are powerful enough to approximate and learn Lipschitz functions. Next, we show a complementary nearly matching lower bound on the width required by dense model to approximate the same class of functions. We use an information theoretic argument.
\begin{theorem} \label{thm:dense-lb}
Consider the problem of learning $L$-Lipschitz functions on $[-1,1]^d$ to $\ell_{\infty}$ error $\epsilon$ when the inputs to the function are sampled from a uniform distribution over an unknown $k$-dimensional subspace of $\mathbb{R}^d \cap [-1,1]^d$.
A dense model of width $t$ with a random bottom layer requires 
$$t = \Omega\left(\frac{(\sqrt{d}L)^k}{(C\epsilon)^k}\right),$$
for a constant $C > 0$ to learn the above class of functions. The number of samples required is
$$\Omega\left(\frac{k(\sqrt{d}L)^k\log(2\sqrt{d}L/C\epsilon)}{(C\epsilon)^k}\right).$$
\end{theorem}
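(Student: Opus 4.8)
The plan is to exhibit a single hard instance---a fixed $k$-dimensional subspace together with a large, well-separated family of $L$-Lipschitz targets---and then argue separately that (i) any width-$t$ linear model is too low-dimensional to fit the whole family unless $t$ is large, and (ii) uniform sampling must touch essentially every cell of a fine partition, which forces the sample bound. First I would choose the subspace $V$ to be ``diagonal'' so that $\Gamma = V \cap [-1,1]^d$ has $\ell_2$-diameter $\Theta(\sqrt d)$ rather than $\Theta(\sqrt k)$: split the $d$ coordinates into $k$ equal blocks and let the $j$-th basis vector be the normalized indicator of block $j$, so $\Gamma$ is an isometric copy of a cube of side $\Theta(\sqrt{d/k})$ in $\mathbb{R}^k$. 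Partition $\Gamma$ into $N = \Theta\big((\sqrt d\,L/(C\epsilon))^k\big)$ congruent cells of $\ell_2$-diameter $\Theta(\epsilon/L)$ (the same counting as in Lemma~\ref{lem:lsh-buckets-size}, which is where the $\sqrt d$ and the $\epsilon/L$ scale enter). In each cell place a tent $\psi_j$ that is $L$-Lipschitz, equals $\epsilon$ at the cell center $p_j$, and vanishes on the cell boundary. For every sign vector $\sigma \in \{\pm 1\}^N$ set $f_\sigma = \sum_j \sigma_j \psi_j$; disjoint supports make each $f_\sigma$ globally $L$-Lipschitz (up to a constant absorbed into $C$), and any two of the $2^N$ targets differ by $2\epsilon$ at some cell center, so an estimator with $\ell_\infty$-error $<\epsilon$ determines $\sigma$.

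For the width bound, a dense model with a fixed (random) bottom layer outputs $g = A\phi$, which as $A$ varies ranges over the linear space $\mathrm{span}\{\phi_1,\dots,\phi_t\}$ of dimension $\le t$. Evaluating at the $N$ centers sends this space to a subspace $U \subseteq \mathbb{R}^N$ with $\dim U \le t$, while $f_\sigma$ maps to the vertex $\epsilon\sigma$. Since $\|u-\epsilon\sigma\|_\infty < \epsilon$ forces $\mathrm{sgn}(u_j)=\sigma_j$ for every $j$, the model can $\epsilon$-approximate all $2^N$ targets only if $U$ meets all $2^N$ open orthants of $\mathbb{R}^N$. The coordinate hyperplanes cut $U \cong \mathbb{R}^t$ into at most $2\sum_{i=0}^{t-1}\binom{N-1}{i}$ regions, which is $\le 2^N-2 < 2^N$ whenever $t < N$; hence $t \ge N = \Omega\big((\sqrt d\,L/(C\epsilon))^k\big)$. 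Crucially this is a statement about an arbitrary $t$-dimensional feature map, so it holds for every realization of the random bottom layer.

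For the sample bound I would run a coupon-collector argument against the same family. The cells have equal probability $1/N$ under the uniform distribution on $\Gamma$, and a learner has no information about the sign $\sigma_j$ of an unsampled cell; placing the uniform prior on $\sigma$, any output incurs $\ell_\infty$-error $\ge \epsilon$ on an unsampled cell with probability $\ge 1/2$. If $n = o(N\log N)$ then in expectation $N^{\Omega(1)}$ cells go unsampled, so the error exceeds $\epsilon$ with high probability; therefore $n = \Omega(N\log N) = \Omega\big(k(\sqrt d\,L)^k \log(\sqrt d\,L/(C\epsilon))/(C\epsilon)^k\big)$, using $\log N = \Theta\big(k\log(\sqrt d\,L/\epsilon)\big)$, which matches the stated bound.

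The main obstacle is the $\ell_\infty$ geometry in the width step: the naive $\ell_2$/orthogonal-complement estimate only yields $\ell_\infty$-distance $\gtrsim \epsilon/\sqrt N$, which is far too weak, so the argument must instead go through the sign-pattern (orthant) counting above, and I must verify that the bumps are simultaneously $L$-Lipschitz, supported strictly within their cells, and numerous enough ($\Theta(\epsilon/L)$-diameter cells accommodating an $\epsilon$-height tent) to realize exactly the clean value vector $\epsilon\sigma$ at the centers. A secondary point is turning the coupon-collector estimate into a genuine learning lower bound---averaging over the prior on $\sigma$ so that it binds every (possibly randomized) estimator---and checking that the cells have nearly equal measure so that the $1/N$ hitting probability and the $\Theta(N\log N)$ threshold survive.
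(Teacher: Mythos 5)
Your hard family is essentially the same one the paper builds: a near-extremal $k$-dimensional section of $[-1,1]^d$, gridded at scale $\epsilon/L$, with disjointly supported $L$-Lipschitz tents of height $\epsilon$ and random signs at the cell centers, so that distinct targets are $2\epsilon$-separated in $\ell_\infty$ and exactly realize the value vectors $\epsilon\sigma$ at the centers. Where you genuinely diverge is the width bound. The paper argues by bit-counting: assuming $B$ bits per parameter there are at most $2^{Bt}$ distinct networks, and since the family has $2^N$ pairwise-separated members, $t \ge N/B$. You instead exploit the structure of the model itself: with a frozen bottom layer, $\{A\phi : A \in \mathbb{R}^{1\times t}\}$ is a linear space, its evaluation at the $N$ centers is a subspace $U \subseteq \mathbb{R}^N$ of dimension at most $t$, approximating $f_\sigma$ forces $U$ to meet the open orthant with sign pattern $\sigma$, and the central-arrangement bound $2\sum_{i=0}^{t-1}\binom{N-1}{i} \le 2^N-2$ for $t < N$ (Schl\"afli/Cover) yields $t \ge N$. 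This is correct and is, in one respect, stronger than the paper's argument: it needs no finite-precision hypothesis (the paper's bound degrades with the bit budget $B$ and is vacuous for real-valued weights), and it holds for every realization of the random bottom layer. What it gives up is generality: bit-counting applies verbatim to networks whose bottom layer is also trained (at finite precision), while your argument is tied to top-layer linearity --- which is, however, exactly the ``random bottom layer'' model in the theorem. Your sample-complexity argument is also a genuine addition rather than a variation: the paper's proof, as written, establishes only the width bound and never derives the stated sample bound, whereas your coupon-collector-plus-uniform-prior argument (an unsampled cell leaves its sign uniformly distributed, so any estimate errs by $\ge \epsilon$ there with probability $1/2$) delivers the claimed $\Omega(N\log N)$.

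One caveat, which you share with the paper and partially flag yourself: the cell count. By Ball's theorem on sections of the cube, no $k$-dimensional section of $[-1,1]^d$ has volume exceeding $\bigl(2\sqrt{d/k}\bigr)^k$, and your block subspace attains this; but a cell that hosts an $\epsilon$-height, $L$-Lipschitz tent vanishing on its boundary needs inradius $\ge \epsilon/L$, i.e.\ side $\ge 2\epsilon/L$, so the honest count is $N = \Theta\bigl((\sqrt{d}L/(\sqrt{k}\epsilon))^k\bigr)$, not $\Theta\bigl((\sqrt{d}L/(C\epsilon))^k\bigr)$ for an absolute constant $C$ --- a cell of $\ell_2$-diameter $\Theta(\epsilon/L)$ cannot accommodate an $\epsilon$-height tent, since the height is at most $L$ times the inradius. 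The paper's proof has the identical $k^{\Theta(k)}$ slack, so this is not a gap relative to the paper; but to defend the theorem's literal constant you would need to either restate the bound with $C\sqrt{k}\epsilon$ in place of $C\epsilon$ or absorb the $\sqrt{k}^{\,k}$ factor into the asymptotic notation.
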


Our approach for this theorem is to use a counting argument. We first bound the number of distinct functions, which is exponential in the number of parameters (measured in bits). We then construct a large family of target functions that are pairwise far apart from each other. Hence, if we learn the wrong function, we incur a large error. Our function class must be large enough to represent any possible target function, and this gives a lower bound on the size of the approximating network.

Theorem~\ref{thm:dense-lb} shows a large gap in the time complexity of inference using a dense model and an LSH model. The inference times taken by a dense model vs.~a sparse model differ exponentially in $1/\epsilon$.
\begin{align}
    \textbf{Sparse: } O\left(dk\log(1/\epsilon) \right)  \qquad \text{vs.} \qquad \textbf{Dense: } \Omega\left(d\left(\frac{2\sqrt{d}}{C\epsilon} \right)^k \right)
\end{align}

Overall, the above theorems show that LSH-based sparsely activated networks can approximate Lipschitz functions on a $k$-dimensional subspace. The size and sample complexity match between sparse and dense models, but the sparse models are exponentially more efficient for inference.

\section{Experiments}
\label{sec:experiments}

To empirically verify our theoretical findings, we align our experiments with our proposed models and compare dense models, data-dependent sparse models ($\DSM$), LSH models, and sparse models with random hash based sparse layers. While $\DSM$ and LSH models are analyzed in the previous section, random hash based layers introduce sparsity with a random hash function, as an alternative of learnable routing modules and LSH in MoE models \cite{roller2021hash}. 
Our goal is to show that the $\DSM$ and $\LSH$ models achieve small MSE while using much fewer activated units than dense models. We also report similar observations on the CIFAR-10 dataset.

\subsection{Experimental set-up} 

\mypar{Models.} We compare dense models and three sparse models that cover a wide range of possible routing methods (Top-$K$ $\DSM$, $\LSH$, and random). We use Top-$K$ routing as a representative $\DSM$ model; it uses the $K$ largest activations, which depends on the input (see Section~\ref{sec:model-results}).
Dense, $\DSM$ and random hash sparse models contain a random-initialized, non-trainable bottom layer. 
For the random hash model, we uniformly choose a subset of the units.
All networks have a trainable top layer. $\LSH$ models have non-trainable hyperplane coefficients for hashing and a trainable scalar in each bucket (this is the network output for inputs in that bucket). For ablations, we vary the number of hidden units and sparsity levels.


\mypar{Data.} 
Our main goal is to empirically evaluate the construction from Theorem~\ref{thm:main-lipschitz-lsh-ub}, while comparing routing functions. To align with our theory, we consider Lipschitz target functions. We synthetically construct the data with two functions that are common bases for continuous functions: random low-degree polynomials and random hypercube functions. Both are normalized to be $L$-Lipscthiz for $L = O(1)$. We also present results on a real dataset, CIFAR-10, which corroborates our findings from the synthetic data experiments.

\mypar{Random polynomial.} $p(x)$ of degree $D$ for $x \in \mathbb{R}^d$ with sum of coefficient absolute values \rebut{$< 1/D$}. \rebut{ As a result, we ensure that the function is 1-Lipschitz over $[-1,1]^d$.}

\mypar{Random hypercube function.} $f: [-1, 1]^d \rightarrow \mathbb{R}$ which interpolates the indicator functions at each corner with random $\{-1, 1\}$ value at each corner. Concretely, the function is defined as follows: for each corner point $y \in \{-1, 1\}^d$, its indicator function is $I_y(x) = \prod_{i=1}^{d} \frac{1 + y_i x_i}{2}$. Sample random values $v_{y} \in \{-1, 1\}$ with probability $(0,5, 0.5)$ independently for each $y \in \{-1, 1\}^d$, the random hypercube polynomial function is $f(x) = \sum_{y \in \{-1, 1\}^d} v_y I_y(x)$.

For the random polynomial (random hypercube) functions, we use $d = 8$ and $D = 4$ in the this section and provide more results for other parameter settings in the appendix (for example, see Section~\ref{sec:manifold-exps} for when when the target polynomial functions concentrate on a low-dimensional subspace).

\subsection{Results} 

\mypar{MSE for random functions.} Figures \ref{fig:sparse_poly} and \ref{fig:sparse_hypercube} show the scaling behavior of the $\DSM$ and LSH models for a random polynomial function and  hypercube function (lower is better). Sparsity helps for the $\DSM$ and LSH models, both achieving better quality than dense models using the same number of activated units. In Figure~\ref{fig:comparison}, we compare the $\DSM$/LSH models with the random hash sparse models, and we see random hash sparse models underperform dense models, suggesting data-dependent sparsity mechanisms (such as $\DSM$/LSH) are effective ways to utilize sparsity.

\mypar{FLOPs.} To further qualify the efficiency gain of sparse models, we compare the MSE at the same FLOPs for sparse/dense models in Table~\ref{tab:flops}. The first column is the \# FLOPs for the dense model; models in the 3rd and 4th columns use same \# FLOPs but have more parameters (only 50\% or 25\% active). DSM uses only 18k FLOPs and gets smaller MSE than dense model with 73k FLOPs.

\begin{table}[h!]
    \centering
    \vspace{-0.05in}
    \begin{tabular}{|c|c|c|c|} \hline
    FLOPs & eval MSE (dense) & eval MSE (DSM 50\% sparsity) &  eval MSE (DSM 25\% sparsity)\\ \hline
    18432 & 0.01015	& 0.01014 &	{\bf 0.009655}\\ \hline
    36864 &	0.01009	& 0.007438 & {\bf 0.005054}\\ \hline
    73728 &	0.01046 & 0.006115	& {\bf 0.001799}\\ \hline
    \end{tabular}
    \caption{FLOPs and evaluation Mean Squared Error (eval MSE).}
    \vspace{-0.1in}
    \label{tab:flops}
\end{table}

\begin{figure}[h!]
  \centering
  \begin{subfigure}[b]{0.5\linewidth}
    \centering
    \includegraphics[width=\textwidth]{figs/dsm_scaling_sc.pdf}
    \vspace{-0.15in}
    \label{fig:sparse_poly_a}
  \end{subfigure}%
  \begin{subfigure}[b]{0.5\linewidth}
    \centering
    \includegraphics[width=\textwidth]{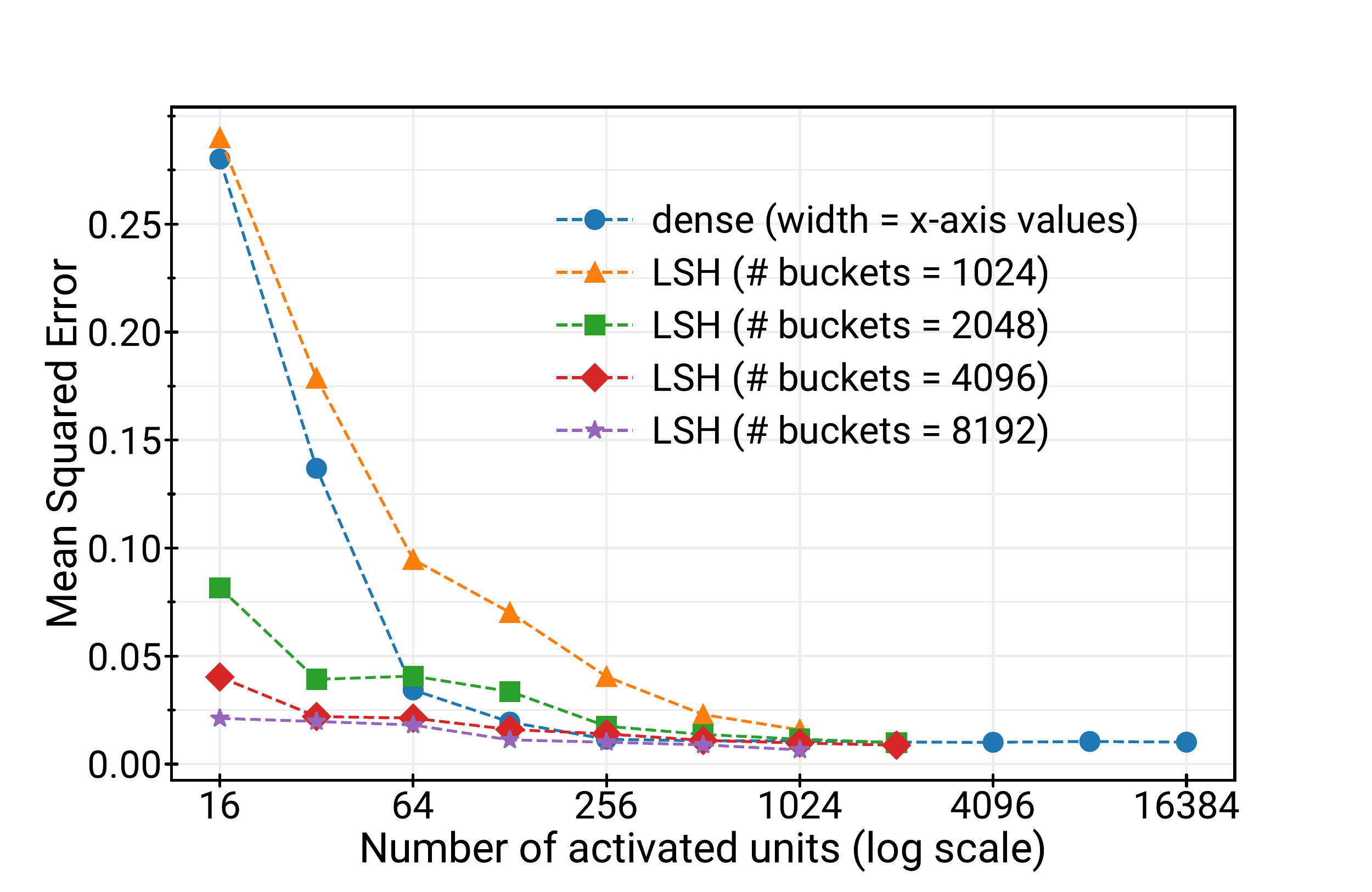}
    \vspace{-0.15in}
    \label{fig:sparse_poly_b}
  \end{subfigure}
  \caption{Scaling behavior of $\DSM$ and LSH models compared with dense models for a degree $4$ random polynomial with input dimension $8$: (a) $\DSM$ outperforms dense model at the same number of activated units and quality improves as total width increases; (b) LSH model outperforms dense model when number of buckets is large ($\geq 2048$) and quality improves as number of buckets increase.}
   \label{fig:sparse_poly}
   \vspace{-0.1in}
\end{figure}
\begin{figure}[h!]
  \centering
  \begin{subfigure}[b]{0.5\linewidth}
    \centering
    \includegraphics[width=\textwidth]{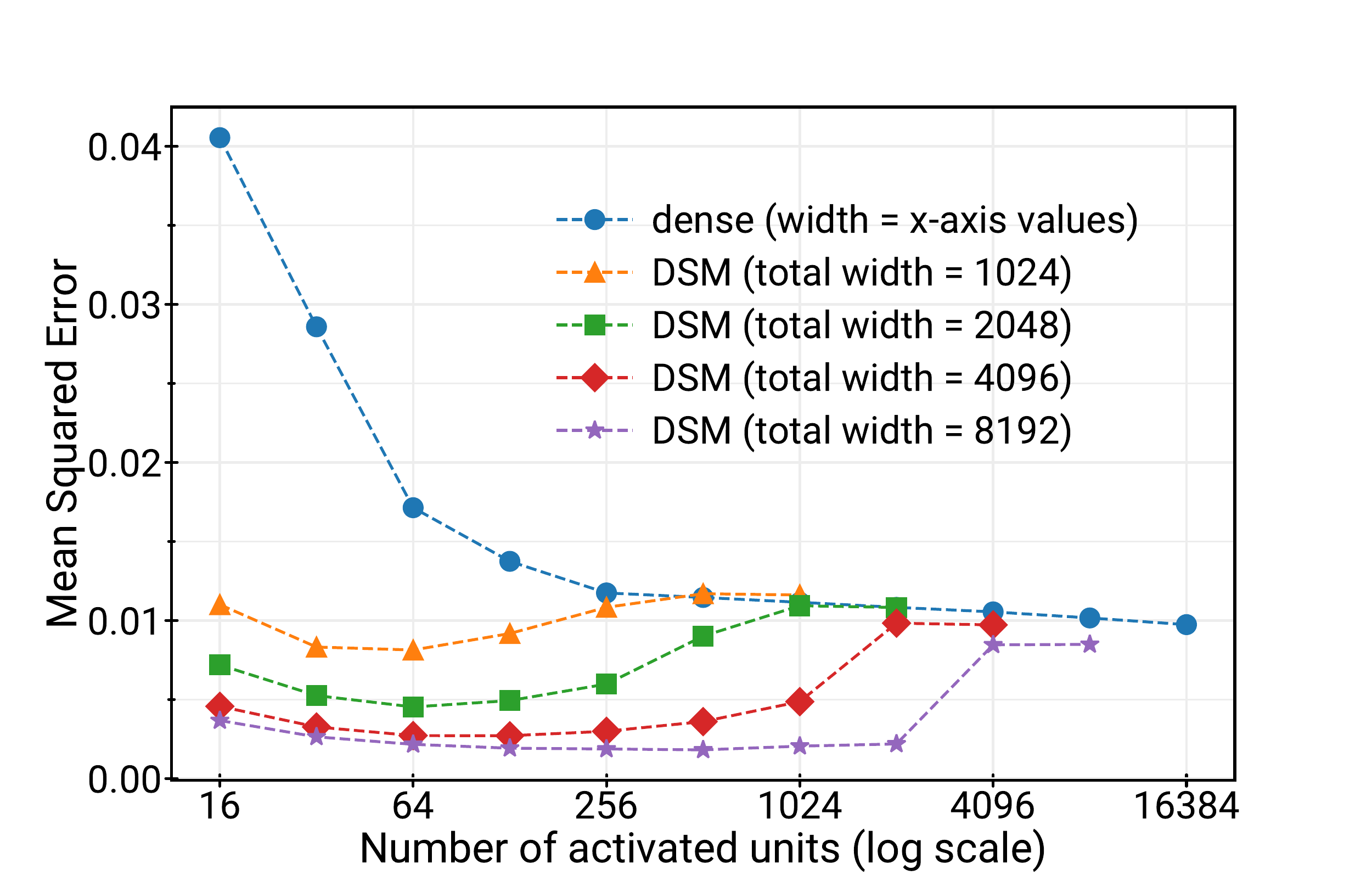}
    \vspace{-0.15in}
    \label{fig:sparse_cube_a}
  \end{subfigure}%
  \begin{subfigure}[b]{0.5\linewidth}
    \centering
    \includegraphics[width=\textwidth]{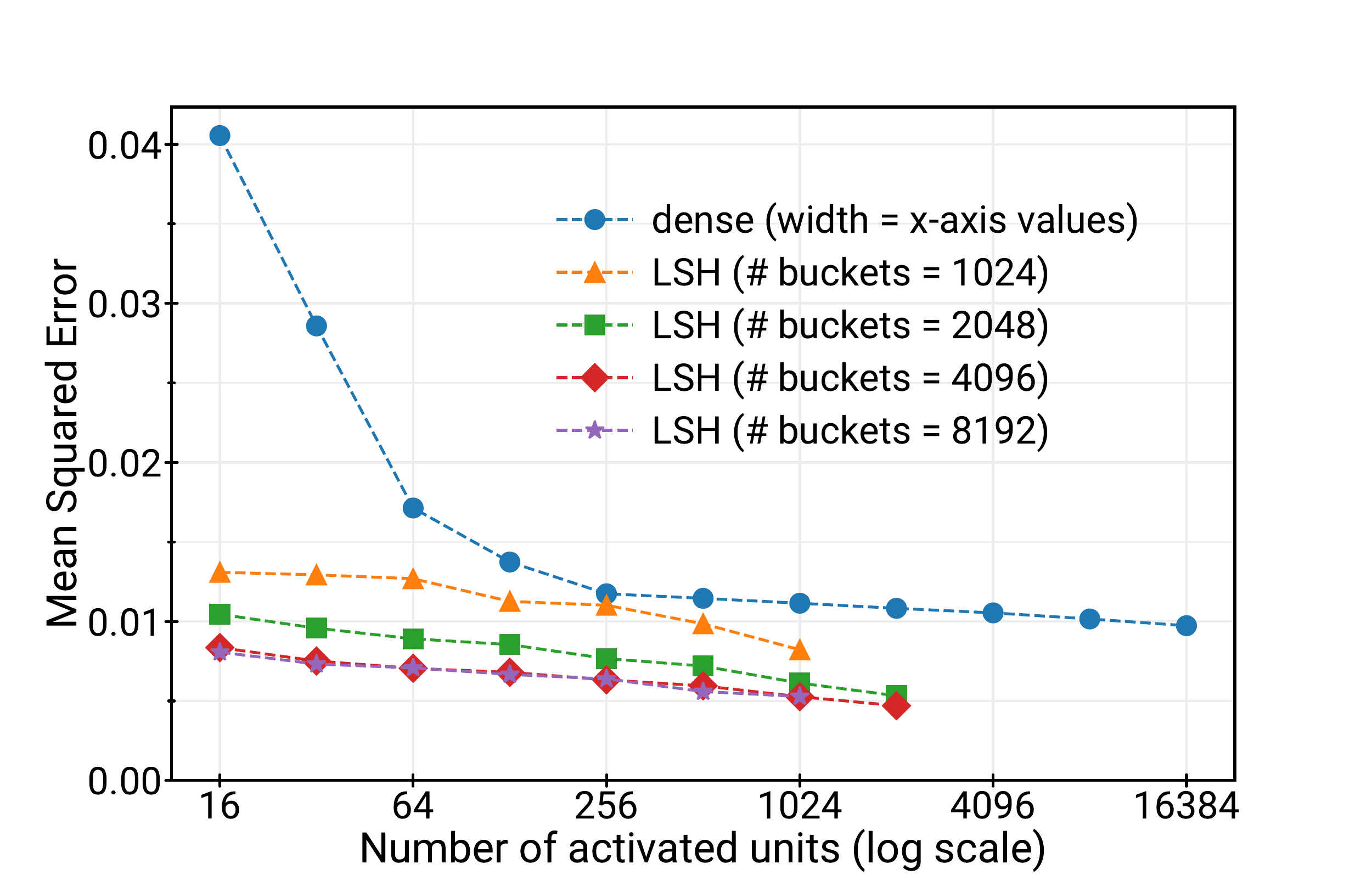}
    \vspace{-0.15in}
    \label{fig:sparse_cube_b}
  \end{subfigure}
  \caption{Scaling behavior of $\DSM$ and LSH models compared with dense models for a random hypercube function with input dimension $8$. Both $\DSM$ and LSH models outperform corresponding dense models with the same number of activated units (note: \# activ.~units $\leq$ width).}
   \label{fig:sparse_hypercube}
   \vspace{-0.1in}
\end{figure}

\mypar{CIFAR-10.} We also compare the scaling behavior of DSM and dense models on CIFAR-10~\cite{cifar}. The baseline model is a CNN with $3$ convolutional layers (followed by max-pooling), a dense layer with varying number of units, and a final dense layer that computes the logits (referred as CNN + dense). For the data-dependent sparse models, we use the same architecture, except we change the penultimate dense layer with a data-dependent sparse layer that choose the Top-$K$ activations (referred as CNN + DSM). Both models are trained with ADAM optimizer for $50$ epochs and evaluated on the test dataset for model accuracy with no data augmentation; see Figure~\ref{fig:cifar10} and Table~\ref{tab:cifar10} for the accuracy versus number of activated units. As with the synthetic datasets,  DSMs outperform dense models at the same number of activated units.



 \subsection{Discussion} 
 Our experimental results
 (e.g., Figures \ref{fig:sparse_poly},  \ref{fig:sparse_hypercube}, and \ref{fig:comparison}) show that sparsely activated models can efficiently approximate both random polynomials and random hypercube functions. Intuitively, the $\DSM$ and $\LSH$ models employ the sparsity as a way to partition the space into nearby input points.  Then, because the target function is Lipschitz, it is easy to provide to local approximation tailored to the specific sub-region of input space. On the other hand, the uniform random hash function performs poorly for these tasks precisely because it does not capture the local smoothness of the target function.
 
 On CIFAR-10, we also see that the $\DSM$ model performs comparably or better than the dense network. In particular,  Figure~\ref{fig:cifar10} shows that the ``CNN + DSM'' approach with total width 1024 improves upon or matches the dense model. In this case, the sparse activation allows the network to classify well while using only a fraction of the number of FLOPs. In Table~\ref{tab:cifar10}, we see that $\DSM$ model outperforms the dense model when we control for the number of activated units in the comparison.
 


\mypar{Limitations.} Our experiments focus on small datasets and 2-layer networks as a way to align with our theoretical results. Prior work on sparsely activated networks has shown success for large-scale NLP and vision tasks. Our experiments complement previous results and justify the $\DSM$ and $\LSH$ models by showing their ability to approximate Lipschitz functions (consistent with our theorems). It would be good to further evaluate the $\DSM$ and $\LSH$ models for large-scale tasks, for example, by using them as inspiration for routing functions in MoE vision transformers, such as V-MoE~\cite{riquelme2021scaling}. It would be interesting to evaluate on larger datasets, such as ImageNet as well. We experimented with a handful of hyperparameter settings, but a full search may lead to different relative behavior between the models (similarly, we only evaluated a few parameter settings for the random functions and the synthetic data generation, which is far from exhaustive).

\begin{figure}[t!]
  \centering
  \begin{subfigure}[b]{0.5\linewidth}
    \centering
    \includegraphics[width=\textwidth]{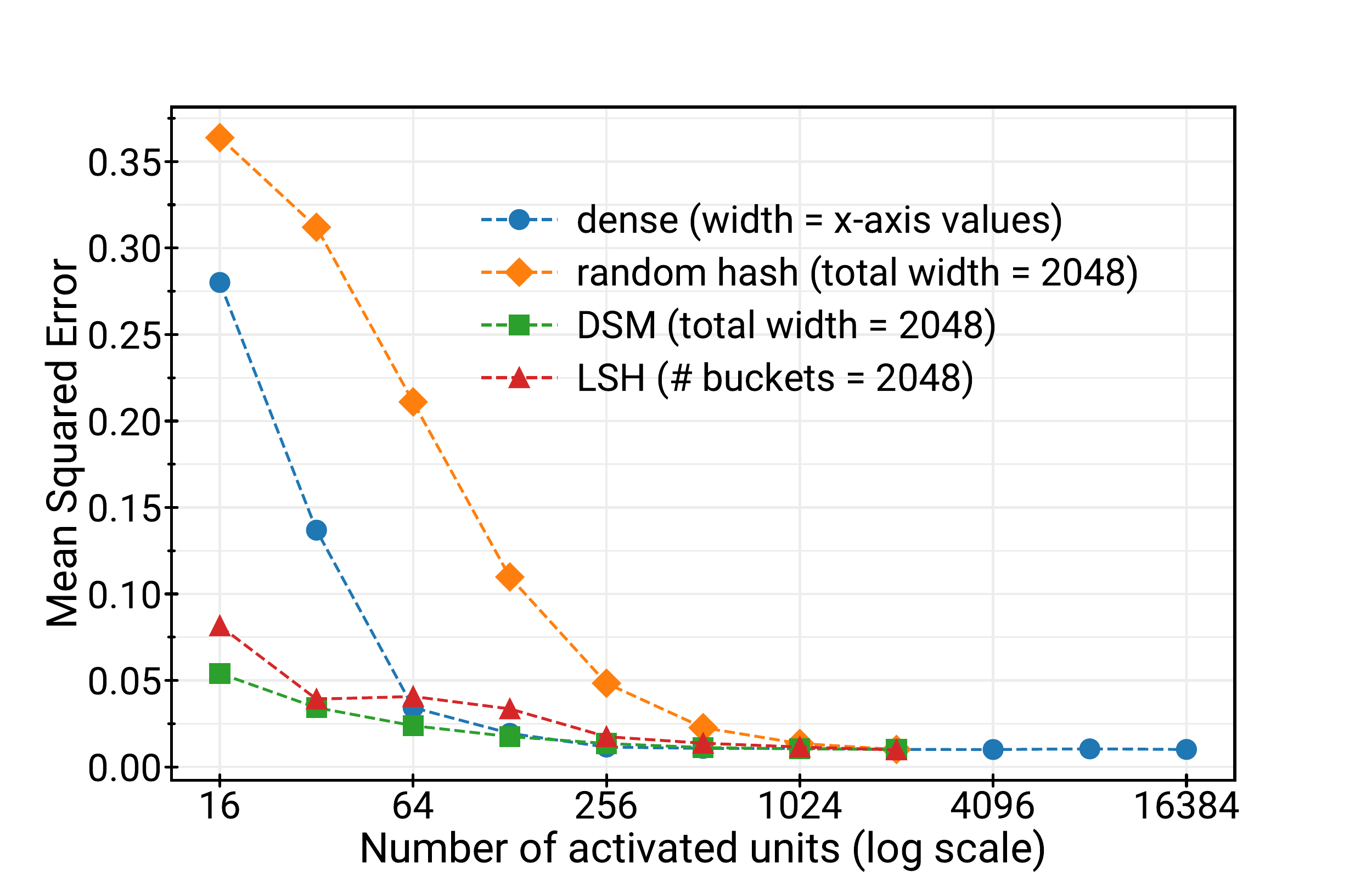}
    \caption{random polynomial}
    \label{fig:sparse_poly_a_hash}
  \end{subfigure}%
  \begin{subfigure}[b]{0.5\linewidth}
    \centering
    \includegraphics[width=\textwidth]{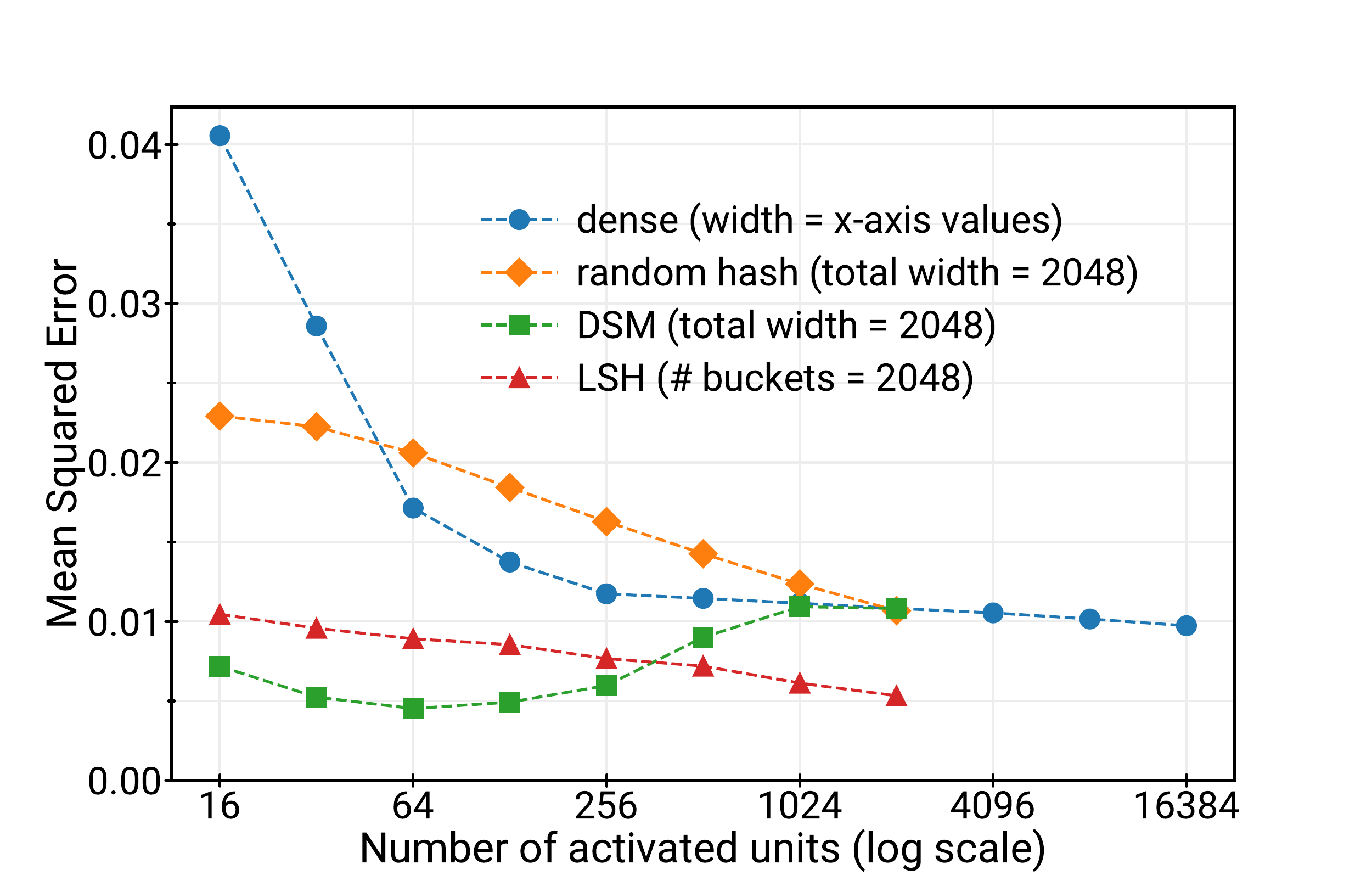}
    \caption{random hypercube function}
    \label{fig:sparse_poly_b_hash}
  \end{subfigure}
  \caption{Scaling behavior of dense, random hash, $\DSM$, and LSH models. $\DSM$ and LSH models outperform dense models, while random hash models underperform dense models with the same number of activated units.}
   \label{fig:comparison}
   \vspace{-0.1in}
\end{figure}

 \begin{figure}[t]
\begin{floatrow}
\ffigbox{%
  \includegraphics[scale=0.25]{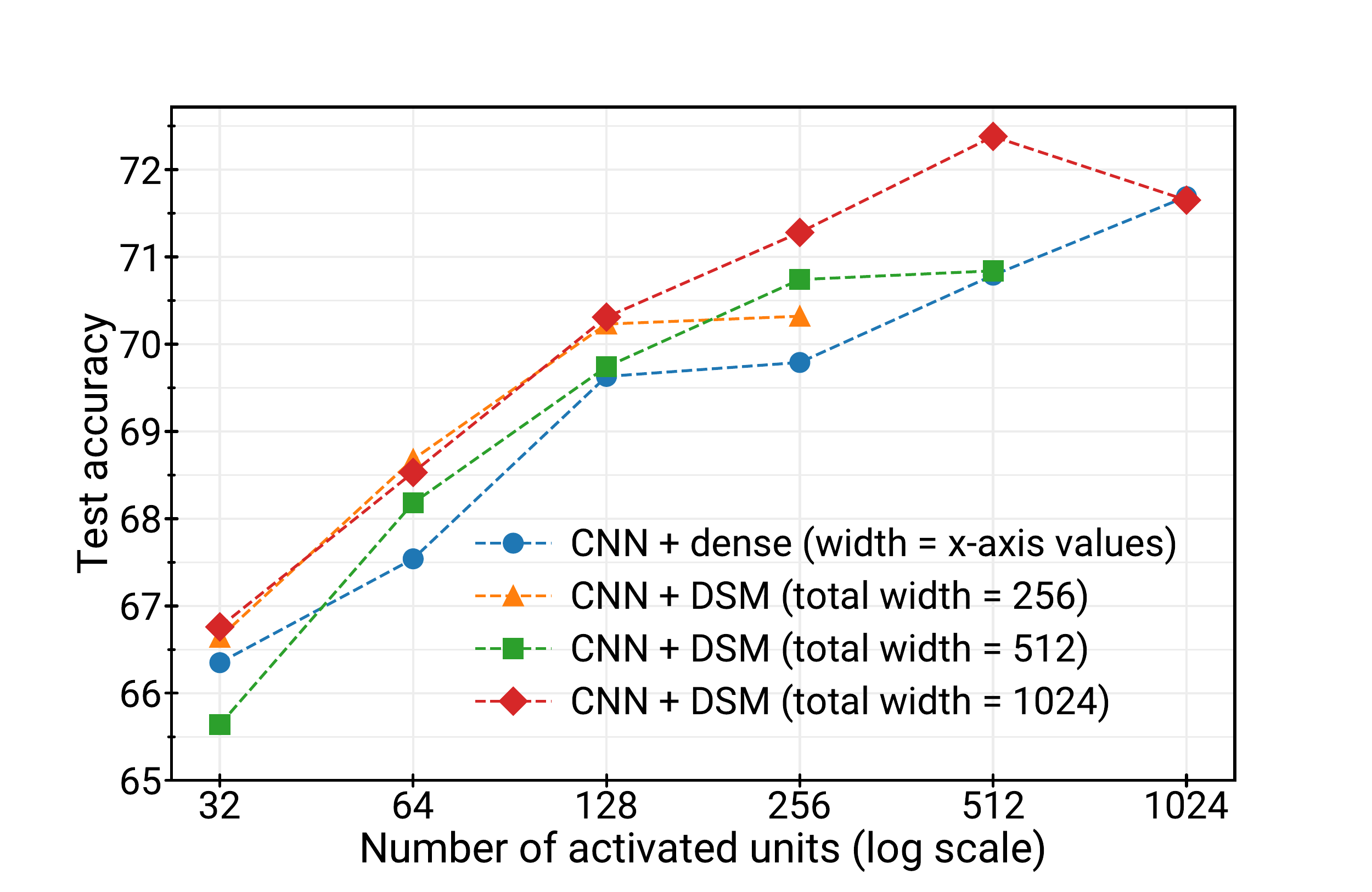}%
}{%
  \caption{Scaling behavior of $\DSM$ compared with dense models on the CIFAR-10 dataset. Similar to Figure~\ref{fig:sparse_poly_a_hash}, $\DSM$ outperforms dense models at the same number of activated units. }%
  \label{fig:cifar10}
}
\capbtabbox{%
 \begin{tabular}{|c|c|c|} \hline
    Model \textbackslash \  \# activated units & 256 &  512 \\ \hline
    Dense	& 69.79 & 70.79 \\ \hline
    $\DSM$ (50\% sparse)	& {\bf 70.74}	 & 71.33 \\ \hline
    $\DSM$ (25\% sparse)	& 69.8	 & {\bf 71.68} \\ \hline
 \end{tabular}
}{%
  \caption{CIFAR-10 test accuracy for dense/$\DSM$ models with the same number of activated units. While not strictly monotonic, wider and sparser models outperform narrow and dense ones.}%
  \label{tab:cifar10}
}
\end{floatrow}
\end{figure}



\section{Conclusion}
\label{sec:conclusion}

We provided the first systematic theoretical treatment of modern sparsely activated networks. To do so, we introduced the $\DSM$ model, which captures the sparsity in Mixture of Experts models, such as Switch and Scaling Transformers. We showed that $\DSM$ can simulate popular architectures as well as LSH-based networks. Then, we exhibited new constructions of sparse networks.
Our use of LSH to build these networks offers a theoretical grounding for sparse networks.
We complemented our theory with experiments, showing that sparse networks can approximate various functions.

For future work, it would be interesting to implement LSH-based networks in transformers for language/vision tasks. A related question is to determine the best way to interpolate in each LSH bucket (e.g., a higher degree polynomial may work better). 
Another question is whether a dense model is more powerful than a sparse model with the same number of total trainable parameters. Theorem~\ref{thm:main-lipschitz-lsh-ub} only says that a sparse model with similar number of parameters as a dense model can more efficiently (fewer FLOPs) represent Lipschitz functions. This does not say \emph{all} functions expressible by a dense model are also expressible by a sparse model. This is non-trivial question as $A^x$ depends on the input (i.e., $\DSM(d,t,t)$ may be more expressive than the dense model with width $t$).
We expect that dense networks can be trained to perform at least as well as sparse networks, \emph{assuming the width is large enough}. The dense networks should optimize the weights in the last layer to approximate the function, but they may not favor certain neurons depending on the input.

\bibliography{refs}
\bibliographystyle{plain}
\newpage
\appendix

\iftrue

\section{Proofs of Main Upper and Lower Bounds}

\subsection{Proof of the Lipschitz Upper Bound}

\begin{proof}[Proof of Theorem~\ref{thm:main-lipschitz-lsh-ub}.]
We use the Euclidean-$\LSH$ construction of Lemma~\ref{lem:lsh-buckets-size} with parameter $\epsilon/L$. In any sub-region of the $k$-dimensional subspace that has a small diameter, the Lipschitz nature of the function together with Lemma~\ref{lem:lsh-buckets-size} will imply that we can approximate it by just a constant and incur only $\epsilon$ error in $\ell_{\infty}$. In particular, given a point $x_1$ belonging to an LSH bucket, we can set $\hat{f}(x) = f(x_1)$ everywhere in that bucket. For any $x_2$ also mapping to the same bucket, from Lemma~\ref{lem:lsh-buckets-size}, we have that $\|x_1 - x_2\|_2 \le \epsilon/L$. Since $f$ is L-Lipschitz,
\begin{align}
    |\hat{f}(x_2) - f(x_2)| = |f(x_1) - f(x_2)| \le L\|x_1 - x_2\|_2 \le \epsilon.
\end{align}

Next we look at how many samples we need to obtain the guarantee $\|\hat{f} - f\|_{\infty} \le \epsilon$. A rare scenario that we have to deal with for the sake of completeness is when there exist buckets of such small volume that no training data point has mapped to them and consequently we don't learn any values in those buckets.
At test time, if we encounter this rare scenario of mapping to a bucket with no value learnt in it, we simply run an approximate nearest neighbor search among the train points. For our prediction, we use the bucket value associated with the bucket that the approximate nearest neighbor maps to. To control the error when doing such a procedure, we take enough samples to approximately form an $\epsilon/\Gamma L$ cover of $\Gamma$ for a large enough constant $\Gamma$.
The size of an $\epsilon/\Gamma L$ cover of $\Gamma$ is $O((2\Gamma L\sqrt{d}/\epsilon)^k)$. This implies that, via a coupon collector argument, when the input distribution is uniform over the region $\Gamma$,
$O(k(2\Gamma  L\sqrt{d}/\epsilon)^k\log(2\Gamma  L\sqrt{d}/\epsilon))$ samples will ensure that with very high probability, for every test point $x$ there exists a train example $x_i$ such that $\|x-x_i\|_2 \le 2\epsilon/\Gamma L$. The test error is $|f(x) - \hat{f}(x_i)| \le |f(x) - f(x_i)| + |f(x_i) - \hat{f}(x_i)| = O(\epsilon)$. Computing the exact nearest neighbor is a slow process. Instead we can compute the approximate nearest neighbor using LSH very quickly. We lose another $O(\epsilon)$ error due to this approximation. Choosing $\Gamma$ appropriately we can make the final error bound exactly $\epsilon$.
This leads to our stated sample complexity bound.

This implies that
$  
\|f - \hat{f}\|_{\infty} \le \epsilon.
$
Hence using an Euclidean $\LSH$ with $O(k)$ hyperplanes we can learn an $\epsilon$-approximation to $f$. The time to compute $\hat{f}(x)$ for a new example is the time required to compute the bucket id where it maps to. Since there are $k$ hyperplanes and our input is $d$-dimensional, computing the projections of $x$ on the $k$ hyperplanes takes $O(dk)$ time. Then we need to perform a division by the width parameter $\epsilon/L$. This takes time equal to the number of bits required to represent $L/\epsilon$. Hence the total time taken would be $O(dk\log(L/\epsilon))$.
\end{proof}

The above theorem uses a lemma about Euclidean LSH, which we prove next.

\begin{proof}[Proof of Lemma~\ref{lem:lsh-buckets-size}.]
Let $K = Ck$. Let the random hyperplanes chosen by the Euclidean LSH be $a_1, \ldots, a_K$. Let the width parameter used by the LSH be $\epsilon_{LSH}$. The value of $\epsilon_{LSH}$ we choose will be determined later. Since the distribution of entries is spherically symmetric, the projection of the vectors onto the $k$-dimensional subspace will also form a Euclidean-$\LSH$ model. Henceforth in our analysis we can assume that all our inputs are projected onto the $k$-dimensional space $\Gamma$ and that we are performing LSH in a $k$-dimensional space instead of a $d$-dimensional one. Let $A = [a_1, \ldots, a_K]^\top$ be the matrix whose columns are the vectors perpendicular to the hyperplanes chosen by the LSH. Note that $A \in \mathbb{R}^{K \times k}$. Then we have, from tail properties of the smallest singular value distribution of Gaussian random matrices (e.g. see \cite{chen2005condition,rudelson2009smallest}), for a large enough constant $c$,
\begin{align}
    \Pr[\sigma_{\min}(A) \ge c\sqrt{k}] \ge 9/10. \label{eq:smallest-singular-val}
\end{align}
For two points $x_1,x_2 \in \Gamma$ to map to the same LSH bucket, $\|A(x_1-x_2)\|_{\infty} \le \epsilon_{LSH}$. This implies that $\|A(x_1 - x_2)\|_2 \le \epsilon_{LSH}\sqrt{k}$. Together with \eqref{eq:smallest-singular-val}, this implies that $\|x_1 - x_2\|_2 \le \epsilon_{LSH}/c$ with probability $\ge 9/10$.
At the same time, since we $x_1, x_2 \in [-1,1]^d$, the maximum distance along any direction is at most the length of any diagonal, which is $2\sqrt{d}$. Moreover, along any hyperplane direction sampled by the LSH, we grid using a width $\epsilon_{LSH}$. Since the total number of hyperplanes is $Ck$ the maximum number of LSH buckets possible is
$
 \left(\frac{2\sqrt{d}}{\epsilon_{LSH}}\right)^{O(k)}.
$
We set $\epsilon_{LSH} = c\epsilon$. Then, the diameter of any bucket $\le \epsilon_{LSH}/c = \epsilon$. The upper bound on the maximum number of LSH buckets required to cover the region $\Gamma$ also follows.
\end{proof}

\subsection{Proof of the Dense Lower Bound}

\begin{proof}[Proof of Theorem~\ref{thm:dense-lb}.]
Assuming $B$ bits per parameter, in our dense layer model we have $2^{Bt}$ distinct possible configurations. We lower bound the width $t$ by constructing a class of functions $\mathcal{F}$ defined on a $k$-dimensional subspace within $[-1, 1]^d$ such that three properties simultaneously hold:
\begin{enumerate}
    \item each $f \in \mathcal{F}$ is $L$-Lipschitz,
    \item the number of functions in $\mathcal{F}$ is at least $\Omega(2^{(2\sqrt{d}L/C\epsilon)^k})$
    \item for $f_1 \neq f_2 \in \mathcal{F}$, we have
    $\|f_1 - f_2\|_{\infty} > \epsilon$.
\end{enumerate} 
These three properties together will imply that $t \ge \frac{1}{B}\dot
(2\sqrt{d}/C\epsilon)^k$ as otherwise by there would have to be two functions $f_1 \ne f_2 \in \mathcal{F}$ that are approximated simultaneously by the same dense network, which is impossible since $\|f_1 - f_2\|_{\infty} > \epsilon$.
We construct $\mathcal{F}$ as follows. Given the $d$-dimensional cube $[-1,1]^d$, we pick a subset of $k$ diagonals of the cube such that they are linearly independent. We consider the $k$-dimensional region defined by the intersection of the subspace generated by these diagonals and the cube $[-1,1]^d$. Denote the region we obtain by $G$. Let $e_1,\ldots, e_k$ form an orthonormal basis for the subspace $G$ lies in. We grid $G$ into $k$-dimensional cubes of side length $2\epsilon/L$ aligned along its bases $\{e_i\}_{i=1}^k$. For the center of every cube we pick a random assignment from $\{+\epsilon, -\epsilon\}$. Then we interpolate the function everywhere in $G$ such that (i) it satisfies the assigned values at the centers of the cubes and (ii) its value decreases linearly to 0 with radial distance from the center. That is, given the set of cube centers $V$
$$f(x) = \sum_{v \in V} \max(0, f(v)-L\sgn(f(v))\|x-v\|_2)$$
To understand the Lipschitz properties of such an interpolation, note that the slope at any given point in $G$ is either $0$ or $L$, which bounds the Lipschitz constant by $L$. The total number of cubes that lie within $G$ is at least $(\sqrt{d}L/C\epsilon)^k$ for some constant $C$ and hence $\mathcal{F}$ contains a total of $(2)^{(\sqrt{d}L/C\epsilon)^k}$ functions. Moreover, given any $f_1, f_2 \in \mathcal{F}$ such that $f_1 \ne f_2$, there exists a cube center where their values differ by $2\epsilon$ giving us the third desired property as well. Consequently, we get that to attain $\epsilon$-uniform error successfully on $\mathcal{F}$ we need $$2^{Bt} \ge 2^{(\sqrt{d}L/(C\epsilon))^k},$$ which implies that $t = \Omega((\sqrt{d}L)^k/(C\epsilon)^k)$.
\end{proof}

\section{$\LSH$ Models Can Also Learn Lipschitz Functions on $k$-Manifolds}
\label{sec:manifolds-apdx}
A $k$-dimensional manifold (referred to as a $k$-manifold) can loosely be thought of as a $k$-dimensional surface living in a higher dimensional space. For example the surface of a sphere in 3-dimensions is a 2-dimensional manifold.
We consider $k$-manifolds in $\mathbb{R}^d$ that are homeomorphic to a $k$-dimensional subspace in $\mathbb{R}^d$. We assume that our $k$-dimensional manifold $M_k$ is given by a transform $f:R^k \rightarrow R^k$ applied on $k$-dimensional subspace of $\mathbb{R}^d$ $L_k$. To control the amount of distortion that can occur when going from $L_k$ to $M_k$, the Jacobian of $f$ is assumed to have a constant condition number for all $x \in L_k$. We now state our main upper bound for manifolds, showing that LSH models can adapt and perform well even with non-linear manifolds of a bounded distortion from a linear subspace.



\begin{theorem}\label{thm:main-lipschitz-ub-manifolds}
For any $f: [-1, 1]^d \to \mathbb{R}$ that is $1$-Lipschitz, and for an input distribution $\mathcal{D}$ that is uniform on a $k$-manifold in $[-1,1]^d$, an $\LSH$ model can learn $f$ to $\epsilon$-uniform error with $O(k\sqrt{dk}^k\log(\sqrt{dk}/\epsilon)/\epsilon^k)$ samples using a hash table of size $O(\sqrt{dk}^k/\epsilon^k)$ with probability $\ge 0.8$.
The total time required for a forward pass on a new test sample is $O(dk\log(1/\epsilon))$.
\end{theorem}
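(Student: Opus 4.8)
The plan is to reduce the manifold case to the subspace case already handled by Theorem~\ref{thm:main-lipschitz-lsh-ub} and Lemma~\ref{lem:lsh-buckets-size}, exploiting the assumption that the parametrization $\psi: L_k \to M_k$ has a Jacobian with bounded condition number. This assumption makes $\psi$ bi-Lipschitz with constant distortion, so that Euclidean distances on the manifold $M_k$ are comparable (up to constant factors) to distances in the flat parameter space $L_k$. I would instantiate the Euclidean-$\LSH$ construction of Lemma~\ref{lem:lsh-buckets-size} with $Ck$ hyperplanes and width parameter $\Theta(\epsilon)$ (since $f$ is $1$-Lipschitz here), applied directly to the ambient points in $\mathbb{R}^d$.

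The core step is a manifold analogue of Lemma~\ref{lem:lsh-buckets-size}: I would show that every LSH bucket, intersected with $M_k$, has diameter $O(\epsilon)$, and that the number of non-empty buckets is $(\sqrt{dk}/\epsilon)^{O(k)}$. For the diameter bound, take two points $x_1, x_2 \in M_k$ in the same bucket, so $\|A(x_1 - x_2)\|_\infty \le \epsilon_{\mathrm{LSH}}$. In the subspace proof the difference $x_1 - x_2$ lived in a fixed $k$-dimensional subspace, so restricting the Gaussian matrix $A$ to that subspace and invoking the smallest-singular-value bound $\sigma_{\min} \ge c\sqrt{k}$ controlled $\|x_1-x_2\|_2$. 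On a curved manifold this must be done locally: within a single (small) bucket, the manifold is well-approximated by the tangent space $T_{x_1} M_k$, and the bounded condition number of the Jacobian guarantees that this tangent space is a genuine $k$-dimensional subspace on which the same singular-value argument applies, at the cost of constant factors absorbed into the $\sqrt{dk}$ term. The count of non-empty buckets then follows by covering $M_k$ with $O(\epsilon)$-diameter patches, transporting the grid used for $L_k$ in Lemma~\ref{lem:lsh-buckets-size} through the bi-Lipschitz map $\psi$.

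Given the bucket-diameter bound, the approximation argument is identical to Theorem~\ref{thm:main-lipschitz-lsh-ub}: store in each bucket the value $f(x_1)$ at a representative point $x_1$, and since $f$ is $1$-Lipschitz and the bucket has diameter $O(\epsilon)$, any other $x_2$ in the bucket incurs error $|f(x_1) - f(x_2)| \le \|x_1 - x_2\|_2 = O(\epsilon)$. The sample-complexity bound comes from the same coupon-collector / covering argument: an $\Theta(\epsilon)$-cover of $M_k$ has size $O((\sqrt{dk}/\epsilon)^k)$, so $O(k(\sqrt{dk}/\epsilon)^k\log(\sqrt{dk}/\epsilon))$ uniform samples place a representative near every test point with high probability, and the rare empty-bucket case at test time is resolved by an approximate nearest-neighbor lookup via LSH, losing only an additional $O(\epsilon)$. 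The forward-pass time is unchanged: computing the bucket id requires projecting onto $O(k)$ hyperplanes ($O(dk)$ time) and a division costing $O(\log(1/\epsilon))$, giving $O(dk\log(1/\epsilon))$.

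The main obstacle is the manifold version of the bucket-diameter lemma. The subspace proof relies crucially on spherical symmetry --- the projection of the Gaussian LSH directions onto a fixed linear subspace is again Gaussian --- which fails for a curved surface whose tangent space varies from point to point. The delicate part is arguing that the curvature does not accumulate within a single bucket, i.e., that the tangent-plane approximation is valid at the scale $O(\epsilon)$ of a bucket and that the bounded condition number is enough to keep the secant $x_1 - x_2$ close to a $k$-dimensional subspace on which the singular-value bound can be applied. Making this local-to-global reduction rigorous, and tracking precisely how the distortion enters the $\sqrt{dk}$ factor, is the technical heart of the proof; the remaining steps transfer mechanically from the subspace case.
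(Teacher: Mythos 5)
Your reduction hinges on a manifold analogue of Lemma~\ref{lem:lsh-buckets-size} with only $Ck$ hyperplanes, and that lemma is false under the paper's hypotheses; this is precisely the step the paper's proof is designed to avoid (it states explicitly that the ``view the $d$-dimensional LSH as a $k$-dimensional LSH'' argument does not survive curvature). With $K = Ck \ll d$ Gaussian hyperplanes, each bucket is an \emph{unbounded} prism in $\mathbb{R}^d$: it is constrained only along $a_1,\dots,a_K$ and free in the orthogonal $(d-K)$-dimensional complement. In the subspace case this is harmless because every secant $x_1-x_2$ lies in one fixed $k$-dimensional subspace, onto which the $a_i$ project as a single $K\times k$ Gaussian matrix. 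On a curved manifold, the secants of \emph{far-apart} pairs do not lie in any fixed low-dimensional subspace, and they are not close to tangent directions either, so your appeal to $T_{x_1}M_k$ conflates secants with tangents; it is also circular, since the tangent-plane approximation inside a bucket is only justified once you already know that bucket $\cap\, M_k$ has diameter $O(\epsilon)$, which is the claim being proved. Concretely, for $k=1$ the condition-number hypothesis is vacuous (the Jacobian of a regular unit-speed curve is a single vector, condition number $1$), so take a smooth arc in $[-1,1]^d$ with roughly $d$ ``petals'' leaving and returning near the origin along $d$ orthogonal directions. A pair of petal tips at ambient distance $\Theta(1)$ lands in a common bucket with probability about $(c\epsilon)^{K}$, and there are $\Theta(d^2)$ such pairs with nearly independent collision events, so for fixed $K = O(k)$ the expected number of far-pair collisions grows like $d^2(c\epsilon)^K$, which diverges as $d$ grows; once far points share a bucket, the constant-per-bucket predictor suffers $\Omega(1)$ error on some $1$-Lipschitz target. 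Ruling out such collisions by a union bound over secants forces $K$ to grow at least like $\log d$ (a chaining argument in the spirit of manifold Johnson--Lindenstrauss needs $K = O(k\log(d/\epsilon))$). Relatedly, your opening claim that a bounded Jacobian condition number makes $\psi$ ambient bi-Lipschitz is also false: a hairpin curve has condition number $1$ but ambient distances far smaller than parameter distances; the hypothesis controls local/geodesic distortion and volumes, not ambient distances.

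The paper's actual proof goes the opposite way around this obstruction. It uses $O(d)$ hyperplanes per hash function, so the full $d$-dimensional smallest-singular-value bound makes \emph{every} bucket have ambient diameter $O(\epsilon)$ unconditionally, with no reference to the geometry of $M_k$; the approximation step is then trivial. The price is that the number of buckets threatens to be exponential in $d$, and that is where the work goes: the paper takes $O(k\log d)$ independent hash tables, proves that every point has, in at least one table, a bucket containing a ball of radius $\Omega(\epsilon/d)$ around it, and then retains only high-volume buckets (found via approximate support estimation), whose number is at most $(O(d^2\sqrt{k}/\epsilon))^k$. The condition-number assumption is used only through the Jacobian determinant, i.e., as an $e^{\pm O(k)}$ \emph{volume} distortion between $L_k$ and $M_k$, never as a distance comparison. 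Your coupon-collector and forward-pass steps would transfer, but the diameter lemma you rely on does not exist at $O(k)$ hyperplanes: any repair must either increase the number of hyperplanes and run a secant-net/chaining argument, or, as the paper does, abandon bucket-diameter control via the manifold and instead bound the number of useful buckets.
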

\begin{proof}
The main idea of the proof is to follow similar arguments from Theorem~\ref{thm:main-lipschitz-lsh-ub} on the subspace $L_k$ and try to bound the amount of distortion the arguments face when mapped to the manifold $M_k$. Since we are no longer dealing with a subspace (linear manifold), the argument that an LSH in $d$-dimensions can be viewed as an equivalent LSH in $k$-dimensions does not hold. We use Euclidean-$\LSH$ models with $O(d)$ hyperplanes. Furthermore, we will use multiple $\LSH$ models each defined using $O(d)$ hyperplanes. The main challenge in the proof is to show that the total number of buckets used in approximating $f$ do not grow exponentially in $d$, which is a possibility now as we use $O(d)$ hyperplanes.

\begin{lemma}
For any $x \in \mathbb{R}^d$, a $d$-dimensional sphere of radius $O(\eps/d)$ centered at $x$ is fully contained in the bucket where the center of the sphere maps to with probability $\ge 0.9$.
\end{lemma}
\begin{proof}
Along any hyperplane direction the gap between parallel hyperplanes is $\eps$. Since any point is randomly shifted before being mapped to a bucket we get that with probability $1- O(1/d)$, $x$ is more than $\Omega(1/d)$ away from each of the two parallel hyperplanes on either side. So with probability $(1- O(1/d))^{O(d)} = \Omega(1) $ the entire sphere is contained inside the LSH bucket $x$ maps to.
\end{proof}

\begin{lemma}
Using $O(k \log d)$ Euclidean-$\LSH$ functions, we get that every $x \in L_k$, there exists a bucket in at least one of the $O(k\log d)$ buckets $x$ gets mapped to such that the entire $k$-dimensional sphere of radius $O(\eps/d)$ centered at $x$ is contained within the bucket.
\end{lemma}
\begin{proof}
We use a covering number argument. The maximum volume of a $k$-dimensional subspace within $[-1, 1]^d$ is $(2\sqrt{d})^k$. We cover this entire volume using spheres of radius $\eps/d$. The total number of spheres required to do this are $O((2d\sqrt{d})^k/\eps^k)$.
We now do a union bound over all the sphere centers in our cover above. For a single sphere, the probability that it does not go intact into a bucket in any of the $O(k\log d)$ LSH functions is $d^{-\Omega(k)}$. By a union bound, we can bound the probability that there exists a sphere center that does not go into a bucket to be $d^{-\Omega(k)}$. Hence the Lemma statement holds with exceedingly large probability of $1-d^{\Omega(k)}$.
\end{proof}

Now, we only include buckets with volume at least $(\Omega(\eps/(d\sqrt k)))^{k}$. We can do this procedure using approximate support estimation algorithms~\cite{valiant2011estimating}. This takes time and sample complexity $S/\log S$ where $S$ is the size of the support. With constant probability all points in $L_k$ are mapped to some such high volume bucket in at least one of the LSH functions. The total number of buckets with this minimum volume is at most $(O((d^2\sqrt k)/\eps))^{k}$, which is also an upper bound on the sample complexity and running time of the support estimation procedure.
Now, we lift all the above results when we go to $M_k$ from $L_k$.
Since the Jacobian of the manifold map $f$ has a constant condition number, its determinant is at most $\mathrm{exp}(k)$; so the volume of any region in $L_k$ changes by at most an $\mathrm{exp}(\pm O(k))$ multiplicative factor when it goes to $M_k$. So all volume arguments in the previous proofs hold with multiplicative factors $\mathrm{exp}(\pm O(k))$.
This concludes our proof.
\end{proof}


\section{Lower Bound for Analytic Functions}

\begin{figure}[ht!]
    \centering
    \begin{subfigure}[b]{0.4\linewidth}
        \centering
        \includegraphics[width=\textwidth]{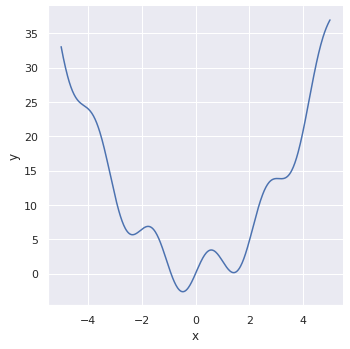}
        \label{fig:analytic-vs-non-analytic-a}
    \end{subfigure}
    \begin{subfigure}[b]{0.4\linewidth}
        \centering
        \includegraphics[width=\textwidth]{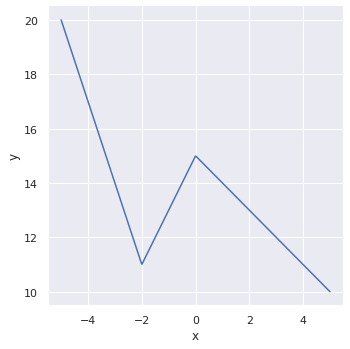}
        \label{fig:analytic-vs-non-analytic-b}
    \end{subfigure}
    \caption{On the left we have an example of an analytic function. The function on the right is not analytic. Both functions have a bounded Lipschitz constant 
    }
    \label{fig:analytic-vs-non-analytic}
\end{figure}

The functions described in the lower bound presented earlier are continuous but not differentiable everywhere as they are piecewise linear functions. In Theorem~\ref{thm:dense-lb-analytical} we show that we can make the lower bound stronger by providing a construction of $L$-Lipschitz analytic functions (which are differentiable everywhere). See Figure~\ref{fig:analytic-vs-non-analytic} for an example of analytic vs non-analytic functions.

\begin{theorem}
\label{thm:dense-lb-analytical}
A dense model of width $t$ with a random bottom layer requires 
$$t = \Omega\left(\frac{2^{k^2/2}(LC_1)^k}{(\sqrt{k}\pi\epsilon)^k}\right),$$
where $C_1$ is a large enough constant,
to learn $L$-Lipschitz analytic functions on $[-1,1]^d$ to $\ell_{\infty}$ error $\epsilon$ when the inputs are sampled uniformly over a unknown $k$-dimensional subspace of $\mathbb{R}^d \cap [-1,1]^d$. Moreover, the number of samples required to learn the above class of functions is
$$\Omega\left(t\log t\right),$$
where $t = \Omega\left(\frac{2^{k^2/2}(LC_1)^k}{(\sqrt{k}\pi\epsilon)^k}\right)$.
\end{theorem}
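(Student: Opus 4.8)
The plan is to reuse the information-theoretic skeleton of the proof of Theorem~\ref{thm:dense-lb} and simply replace its piecewise-linear ``tent'' functions with a family of genuinely analytic functions. As in that proof, a dense layer with $B$ bits per parameter realizes at most $2^{Bt}$ distinct functions, so it suffices to exhibit a family $\mathcal{F}$ of $L$-Lipschitz \emph{analytic} functions on a $k$-dimensional subspace of $[-1,1]^d$ that are pairwise separated by more than $\epsilon$ in $\ell_\infty$; then $2^{Bt} \ge |\mathcal{F}|$, which forces $t = \Omega(\log_2 |\mathcal{F}|)$. The target bound asks for $\log_2|\mathcal{F}| = \Omega\bigl(2^{k^2/2}(LC_1)^k/(\sqrt{k}\,\pi\epsilon)^k\bigr)$, i.e., that many independent ``bumps'' in the construction, and an $\Omega(\log t \cdot t)$ coupon-collector bound on samples.

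For the construction I would grid the region $G$ (the intersection of the chosen $k$-subspace with $[-1,1]^d$) along an orthonormal basis $e_1,\dots,e_k$ with spacing $\delta$, and to each grid center $v$ assign a random amplitude $a_v \in \{+\epsilon,-\epsilon\}$. Rather than the tent $\max(0,\cdot)$, I replace each spike by an analytic Gaussian bump and set
$$ f(x) = \sum_{v} a_v \, \exp\!\left(-\frac{\|x - v\|_2^2}{2\sigma^2}\right). $$
Three properties must then be checked, paralleling the three bullets in the proof of Theorem~\ref{thm:dense-lb}. First, analyticity is immediate: a finite sum of Gaussians is real-analytic (it extends to an entire function), so every $f \in \mathcal{F}$ is differentiable everywhere, as required and as illustrated in Figure~\ref{fig:analytic-vs-non-analytic}. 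Second, each $f$ is $L$-Lipschitz: a single amplitude-$\epsilon$, width-$\sigma$ Gaussian has gradient norm at most $\epsilon/(\sigma\sqrt{e})$, and summing the gradient contributions of all bumps (dominated by the geometrically decaying nearby shells) gives $\|\nabla f\|_\infty \le L$ provided $\sigma$ is a suitable multiple of $\epsilon/L$. Third, for pairwise separation: if $f_1 \ne f_2$ their amplitudes disagree at some center $v$, where $f_1-f_2$ equals $2\epsilon$ up to the interference from all other bumps, so as long as that interference is below $\epsilon$ we get $\|f_1 - f_2\|_\infty > \epsilon$. Counting, $|\mathcal{F}| = 2^{N}$ with $N$ the number of grid cells $\asymp \mathrm{vol}(G)/\delta^k$, and equating $2^{Bt} \ge 2^N$ yields the stated width lower bound.

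The hard part will be exactly the tension introduced by analyticity, which is absent in the piecewise-linear case: a Gaussian has full support, so unlike the disjoint tents the bumps \emph{interfere} everywhere, and I must choose $\sigma$ and $\delta$ so that the Lipschitz and separation properties hold simultaneously. Making $\sigma$ small sharpens the bumps (helping separation, since the tails at other centers shrink) but inflates the per-bump gradient $\epsilon/(\sigma\sqrt{e})$, threatening the Lipschitz bound; making $\sigma$ large does the reverse. The quantitative difficulty is that in $k$ dimensions the number of grid neighbors within a few spacings grows like $3^k$ and successive shells grow polynomially in their radius, so controlling the aggregate interference $\sum_{w\ne v}\exp(-\|v-w\|_2^2/2\sigma^2)$ and the aggregate gradient forces the spacing $\delta$ to scale like $\sigma\sqrt{k}$. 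Tracking these $k$-dependent constants through the count $N \asymp \mathrm{vol}(G)/\delta^k$ is what I expect to produce the $2^{k^2/2}$ numerator and the $(\sqrt{k}\,\pi)^k$ denominator, and is the step requiring the most care. (An alternative construction would mollify the tent family of Theorem~\ref{thm:dense-lb} by convolving with a narrow Gaussian, which automatically gives analytic functions without increasing the Lipschitz constant; the same interference bookkeeping then reappears in estimating how much the mollification dulls the $\pm\epsilon$ peaks.)

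Finally, the sample-complexity claim $\Omega(t\log t)$ follows the same route as in Theorem~\ref{thm:main-lipschitz-lsh-ub}: under the uniform input distribution on the subspace, a coupon-collector argument over the $N \asymp t$ grid cells (equivalently an $\epsilon$-net of $G$) shows that $\Omega(N\log N) = \Omega(t\log t)$ samples are needed before every cell, hence every bump's sign, has been observed, without which the learner cannot pin down the target within $\mathcal{F}$.
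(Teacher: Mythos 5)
Your information-theoretic skeleton (at most $2^{Bt}$ realizable configurations versus a pairwise $\epsilon$-separated family of $L$-Lipschitz analytic functions) is the same as the paper's, and your Gaussian-bump family is indeed real-analytic, so the qualitative part of the plan is sound. But the step you defer --- where you ``expect'' the $2^{k^2/2}$ numerator and the $(\sqrt{k}\pi)^k$ denominator to emerge from tracking $k$-dependent constants --- is exactly where the proposal fails, and it fails for a structural reason, not a bookkeeping one. In any localized-bump construction, $\ell_\infty$ separation of two functions that differ at one center forces the per-bump amplitude to be $\approx \epsilon$ and the bumps to be nearly non-overlapping ($\delta \gtrsim \sigma$, which is your interference condition), while the Lipschitz constraint forces $\sigma \gtrsim \epsilon/L$ (your per-bump gradient bound $\epsilon/(\sigma\sqrt{e}) \le L$). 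Hence the number of cells is capped: $N \le \mathrm{vol}(G)/\delta^k \le (C\sqrt{d}L/\epsilon)^k$, since the volume of a $k$-dimensional section of $[-1,1]^d$ is at most $(2\sqrt{d})^k$. Nothing in this count can supply a factor growing like $2^{k^2/2}$: to reach the stated bound you would need spacing $\delta \approx \sqrt{k}\pi\epsilon/(2^{k/2}L)$, which is exponentially smaller than $\sigma$ unless $\sqrt{d} \gtrsim 2^{k/2}$; at that spacing each bump's tail at a neighboring center has size $\approx \epsilon$ and there are many such neighbors, so separation is destroyed. Since the theorem is supposed to hold for any $d > k$ (its bound has no $d$ in it --- take $d = k+1$ and $k$ large), your construction proves at best an analytic version of Theorem~\ref{thm:dense-lb}, i.e., a $(\sqrt{d}L/C\epsilon)^k$-type bound (your mollification remark reaches the same place), but not Theorem~\ref{thm:dense-lb-analytical} as stated.

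The paper obtains the extra factor by a mechanism your plan has no analogue of: the family is a random-sign Fourier series $f(x) = \sum_{n \in [1/\epsilon_1]^k} \eta_n \epsilon_1^{\alpha} e^{i\pi n^\top x}$ built from global, heavily overlapping modes rather than disjoint bumps. There, Lipschitzness is not enforced term-by-term but holds with high probability over the signs via McDiarmid concentration --- the gradient scales like $(\text{coefficient}) \times \sqrt{N} \times (\text{max frequency})$ rather than $(\text{coefficient}) \times N$ --- and separation is certified in aggregate rather than at a single center: two random sign patterns disagree on $\Omega(N)$ modes (Hamming concentration), and Parseval converts this into an $L_2$, hence $L_\infty$, gap of order $(\text{coefficient}) \times \sqrt{N}$. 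Because separation scales with $\sqrt{N}$ times the coefficient instead of with one bump's amplitude, the per-mode coefficient can be taken far below $\epsilon$, which is what allows the family to be exponentially larger than any nearly-disjoint-bump family; the $2^{k^2/2}$ in the theorem comes out of this Parseval step after optimizing $\alpha$ and $\epsilon_1$. To fix your write-up, either weaken the claim to the $(\sqrt{d}L/C\epsilon)^k$ analytic bound your construction actually gives, or switch to the random-sign trigonometric family and carry out the McDiarmid and Parseval steps. (Your coupon-collector sketch for the $\Omega(t \log t)$ sample bound is consistent with the paper and is fine once the width bound is in place.)
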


\label{sec:theory-apdx}
\begin{proof}[Proof of Theorem~\ref{thm:dense-lb-analytical}]
We construct a family $\mathcal{F}$ of analytic functions that are $L$-Lipschitz and are described using the Fourier basis functions. 
Each $f \in \mathcal{F}$ will be of the form 
$$ f(x) = \sum_{n_1=0}^{\infty}\sum_{n_2=0}^{\infty}\cdots\sum_{n_k=0}^{\infty} a_{n_1n_2\ldots n_k}\exp \left(i\pi n^\top x \right),$$
for $x \in [-1,1]^k$. We pick a small value of $0 < \epsilon_1 < 1$. We assume $1/\epsilon_1$ is an integer for convenience. If it is not, we can simply take $\lceil 1/\epsilon_1 \rceil$ instead. For a set of integers $(n_1, n_2, \ldots, n_k) \in [1/\epsilon_1]^k$, let $\eta_{n_1 n_2 \ldots n_k} \in \{ \pm \}$. We use $\eta_n$ as a shorthand when it is not ambiguous. The family $\mathcal{F}$ is defined as the set of functions $f$ below
\begin{align}
    f(x) = \sum_{n_1, \ldots, n_k = 0}^{1/\epsilon_1} \eta_{n} \epsilon_1^{\alpha} \left( \exp(i\pi n^\top x) + \exp(i\pi n^\top x) \right),
\end{align}
where each $\eta_{n}$ is chosen to be either $\pm L/(C\sqrt{k}\pi)$ for a large enough constant $C$ and $\alpha$ will be determined later. There are $(1/\epsilon_1)^k$ Fourier bases in each $f$ and the coefficient of each is set to be $\pm L\epsilon_1^{\alpha}/(C\sqrt{k}\pi)$. Hence we have
\begin{align}
    |\mathcal{F}| = 2^{\left( (1/\epsilon_1)^k\right)}.
\end{align}
Next we argue that a larger than $0.9$ fraction of the functions in $\mathcal{F}$ are $L$-Lipschitz.
We have,
\begin{align}
    &\nabla f(x) = \sum_{n_1, \ldots, n_k = 0}^{1/\epsilon_1} \eta_n \epsilon_1^{\alpha} i\pi(\exp(i\pi n^\top x) - \exp(i\pi n^\top x)) n \notag\\
    &~~~~~~~~~~~= \sum_{n_1, \ldots, n_k = 0}^{1/\epsilon_1} -2\eta_n\pi\sin(\pi n^\top x)\epsilon_1^{\alpha}n \\
    &\implies \E\left[ \nabla f(x) \right] = 0,
\end{align}
where the last expectation is over the uniform measure over functions in $\mathcal{F}$.
To get a bound on $\|\nabla f(x)\|_2$ we bound each $(\nabla f(x))_i$ with high probability.
Each $(\nabla f(x))_i$ is a sum of $(1/\epsilon_1)^k$ independent random variables, namely $\eta_n$. We saw above that $\E[(\nabla f(x))_i] = 0$. To bound $|(\nabla f(x))_i|$ with high probability we will use McDiarmid's inequality. 
An upper bound on how much the value of $(\nabla f(x))_i$ can change when any one $\eta_n$ flips in value is computed as $4\epsilon_1^{(\alpha - 1)}L/C\sqrt{k}$.
Then, an application of McDiarmid's concentration inequality gives us that,
\begin{align}
    \Pr \left[|(\nabla f(x))_i| > t \right] \le 2\exp\left( \frac{-t^2kC^2\epsilon_1^{(k+2-2\alpha)}}{16L^2}\right), \notag\\
    \implies |(\nabla f(x))_i| \le \frac{L}{\sqrt{k}\epsilon_1^{(k+2-2\alpha)/2}}
\end{align}
with probability $\ge 0.9$ for a large enough constant $C$.
This implies that
\begin{align}
    \|\nabla f(x)\|_2 \le  \frac{L}{\epsilon_1^{(k+2-2\alpha)/2}} \label{eq:lipschitzness-prop-lb}
\end{align}
with probability $\ge 0.9$ for a randomly sampled $f \in \mathcal{F}$.
Now, let $\eta_f$ denote the vector of $\eta_n$ values in sequence for any $f$. Using McDiarmid's (or Hoeffding's) concentration bound again, we also get that, with probability $\ge 0.9$, the Hamming distance between $\eta_{f_1}$ and $\eta_{f_2}$ for two $f$ randomly sampled from $\mathcal{F}$ is at least $c(1/\epsilon_1)^k$ for a small enough constant $c < 1$. This implies that for randomly sampled $f_1, f_2$,
\begin{align}
    &f_1(x) - f_2(x) \notag\\
    &= \sum_{n_1, \ldots, n_k = 0}^{1/\epsilon_1} 2\eta'_{n} \epsilon_1^{\alpha} \left( \exp(i\pi n^\top x) + \exp(i\pi n^\top x) \right),
\end{align}
where $\eta'_n$ is non-zero for at least $c(1/\epsilon_1)^k$ of the terms from the above argument about the Hamming distance. Parseval's identity then implies that
\begin{align}
    &\frac{1}{2^k} \int_{-1}^1 \ldots \int_{-1}^1 (f_1(x) - f_2(x))^2 dx_1 \ldots dx_k \notag\\
    &\ge 4L^2\epsilon_1^{2\alpha}c\frac{1}{\epsilon_1^kC^2k\pi^2} \notag\\
    &\implies \|f_1 - f_2\|_{\infty} \ge \frac{2^{(k/2+1)}L\sqrt{c}\epsilon_1^{(\alpha-k/2)}}{C\sqrt{k}\pi}. \label{eq:parseval-prop-lb}
\end{align}
Finally we note that by union bound, at least a $0.8$ fraction of the functions in $\mathcal{F}$ satisfy both our Lipschitzness property \eqref{eq:lipschitzness-prop-lb} and \eqref{eq:parseval-prop-lb} simultaneously.
Setting $\alpha = k/2 + 1$ and $\epsilon_1 = \frac{C\sqrt{k}\pi\epsilon}{L\sqrt{c}2^{k/2}}$ we get that to achieve a strictly smaller error than $ 2\epsilon$ in the $\|.\|_{\infty}$ sense, one requires a dense model with a width of
$$\Omega\left(2^{k^2/2}\left(\frac{LC_1}{\sqrt{k}\pi\epsilon}\right)^k\right).$$
\end{proof}


\section{Experiment details}
\label{sec:experimental-details-apdx}
\subsection{Details of learning random functions}
In Section \ref{sec:experiments}, we demonstrated experiments for randomly generated polynomial/hypercube functions. Here we present the details for the experiment settings. 

\textbf{Random function generation.} For the random polynomial functions, we randomly generate coefficients of the monomials by sampling from a uniform distribution $\mathcal{U}([-1, 1])$  and scale the coefficients so that their absolute values sum up to $1.0$ (this is to ensure the Lipschitz constant of the generated function is bounded by a constant independent of dimension and degree of the polynomial). For the random hypercube function, we sample values of the function at each corner independently from a uniform distribution on ${-1, 1}$, and interpolate using the indicator functions.

\textbf{Train/Test dataset generation.} For a given target function $f$ (polynomial or hypercube), we sample independently from $\mathcal{U}\left( [-1, 1]^d \right)$ (where $d$ is the input dimension) to generate the input features $x$ and compute target value $y = f(x)$. The train dataset contains $2^{16}$ $(x, y)$ pairs and the test dataset contains $2^{14}$ $(x, y)$ pairs.

\textbf{Training setting.} All the models in Section \ref{sec:experiments} are trained for $50$ epochs using the RMSProp \cite{rmsprop} optimizer with a learning rate of $10^{-5}$. For the one dimension example in Section \ref{sec:intro}, the model is trained for $200$ epochs using the RMSProp optimizer with a learning rate of $5 \times 10^{-6}$.

\textbf{Random hash sparse model.} We discussed the design of DSM and LSH models in Section \ref{sec:prelims}. Here we present the details of the random hash model, where the sparsity pattern is determined by a random hash of the input data (i.e. the same input data would always have the same sparsity pattern). The following code snippet shows the generation of a random mask that only depends on the input data using TensorFlow 2.x.

\begin{verbatim}
import tensorflow as tf

# seed: a fixed random seed
# inputs: the input tensor
# mask_dim: size of the masked tensor
# num_buckets: a large integer
# k: the dimension after masking

input_dim = inputs.shape[-1]
if input_dim != mask_dim:
  proj = tf.random.stateless_normal(
    shape=(input_dim, mask_dim),
    seed=seed)
  inputs = tf.einsum(
    '...i,io->...o', inputs, proj)
hs = tf.strings.to_hash_bucket_fast(
  tf.strings.as_string(inputs),
  num_buckets=num_buckets)
top_k_hash = tf.expand_dims(
  tf.nn.top_k(hs, k).values[..., -1],
  axis=-1)
mask = hs >= top_k_hash
\end{verbatim}

\subsection{Learning high-dimensional random polynomials lying in a low-dimensional manifold}
\label{sec:manifold-exps}
We present experiment results for learning random polynomial target functions with low intrinsic dimensions. To be precise, the target polynomial is $p(A x)$, where $p$ is a polynomial of degree $D$ with sum of coefficient absolute value $< 1/D$, $x \in \mathbb{R}^d$ is a vector, $A \in \mathbb{R}^{k \times d}$ is a matrix with random orthonormal rows, and $d > k$. Note now the intrinsic dimension of the domain is $k$, while the inputs $x$ has higher dimension $d$. In Figure~\ref{fig:poly_proj}, we compare the mean squared loss for dense models and $\DSM$s for $d = 64$, $k=8$, and $D=4$. We observe similar behavior as Figure~\ref{fig:sparse_poly}, where the input dimension is the same as the intrinsic dimension. This validates our analysis in Section~\ref{sec:model-results}.

\begin{figure}[h!]
  \centering
  \includegraphics[scale=0.35]{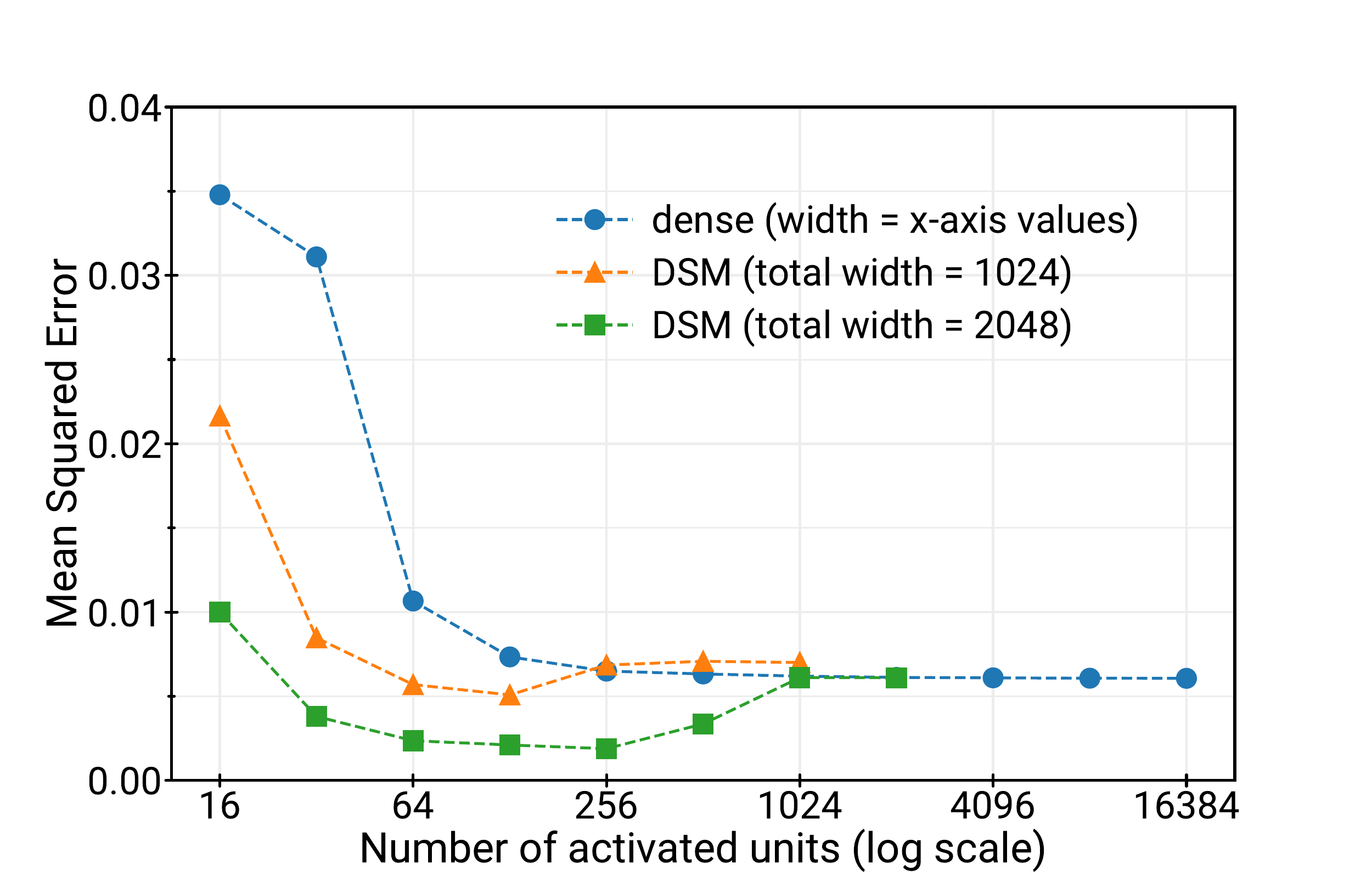}
  \caption{Scaling behavior of $\DSM$ compared with dense models for a random polynomial with low intrinsic dimensionality. Similar to Figure~\ref{fig:sparse_poly}, $\DSM$ outperforms dense models at the same number of activated units. }
  \label{fig:poly_proj}
  \vspace{-0.1in}
\end{figure}





\fi

\end{document}